\newtheorem{lem}{Lemma}
\newtheorem{assumption}{Assumption}
\newtheorem{theorem}{Theorem}
\theoremstyle{remark}
\colorlet{Changes@Color}{red}
\begin{document}
\title{Graph Regularized NMF with $\ell_{2,0}$-norm for Unsupervised Feature Learning}
\author{Zhen Wang and Wenwen Min$^*$\textsuperscript{\orcidlink{0000-0002-2558-2911}}
\IEEEcompsocitemizethanks{
\IEEEcompsocthanksitem Zhen Wang and Wenwen Min is with the School of Information Science and Engineering, Yunnan University, Kunming 650091, Yunnan, China, China. E-mail: minwenwen@ynu.edu.cn.
}
\thanks{Manuscript received XX, 2023; revised XX, 2023.
\newline (Corresponding authors: Wenwen Min)}}


\IEEEtitleabstractindextext{
\begin{abstract}
Nonnegative Matrix Factorization (NMF) is a widely applied technique in the fields of machine learning and data mining. 
Graph Regularized Non-negative Matrix Factorization (GNMF) is an extension of NMF that incorporates graph regularization constraints. 
GNMF has demonstrated exceptional performance in clustering and dimensionality reduction, effectively discovering inherent low-dimensional structures embedded within high-dimensional spaces. 
However, the  sensitivity of GNMF to noise limits its stability and robustness in practical applications. 
In order to enhance feature sparsity and mitigate the impact of noise while mining row sparsity patterns in the data for effective feature selection, we introduce the $\ell_{2,0}$-norm constraint as the sparsity constraints for GNMF. 
We propose an unsupervised feature learning framework based on GNMF\_$\ell_{20}$ and devise an algorithm based on PALM and its accelerated version to address this problem. 
Additionally, we establish the convergence of the proposed algorithms and validate the efficacy and superiority of our approach through experiments conducted on both simulated and real image data.
\end{abstract}
\begin{IEEEkeywords}
graph learning, NMF, feature selection, $\ell_{2,0}$-norm, non-convex optimization
\end{IEEEkeywords}
}
\maketitle
\IEEEpeerreviewmaketitle

\section{Introduction}
\IEEEPARstart{N}{onnegative} matrix factorization (NMF) is a widely used technique in the fields of machine learning \cite{hassani2021text,yu2020correntropy,jiao2020hyper, he2021boosting} and data mining \cite{gao2017local,Ding2010Convex,Wang2013Non,fu2019nonnegative,liu2017regularized, wang2022generalized}. It decomposes a nonnegative data matrix into the product of two nonnegative matrices, where one matrix represents the latent feature representation of the samples and the other matrix represents the weights of the features. NMF's advantage lies in its ability to discover latent structures and patterns in the data, making it applicable in tasks such as clustering, dimensionality reduction, and feature learning. However, traditional NMF methods do not consider the local and global structural information of the data, leading to suboptimal performance on complex datasets.

To overcome this limitation, Graph Regularized Nonnegative Matrix Factorization (GNMF) has been introduced \cite{cai2010graph,dai2019discriminant,mu2021automatic,xiu2021fault, li2020robust}. GNMF incorporates the prior information of the graph structure to assist in feature learning and effectively explore the latent clustering structure in the data. In GNMF, the similarity between samples is mapped onto a graph and combined with the nonnegative matrix factorization approach for feature learning. This allows GNMF to better preserve the local neighborhood and global consistency of the samples, thereby improving the performance of clustering and dimensionality reduction. However, traditional GNMF methods exhibit some sensitivity when dealing with noisy data, limiting their stability and robustness in practical applications, sparse constraints need to be introduced \cite{huang2018robust,zhu2017robust}.

Sparse constraints help improve the uniqueness of decomposition and enhance locally based representation, and in practice, they are almost necessary. The introduction of $\ell_2$ and $\ell_1$ norms serves to mitigate the impact of noise and outliers \cite{huang2018robust,wang2015characteristic}, and the $\ell_{2,1}$-norm can be utilized to quantify the error in matrix factorization, thereby enhancing robustness \cite{zhu2017robust, zhu2017non}. However, these sparsity constraints mentioned above do not pay attention to the row sparsity pattern in the data matrix $\boldsymbol{X}$, making these sparse methods unable to select important features in the data matrix for cluster analysis.

To enhance the sparsity of GNMF and mitigate the influence of noise, this paper introduces the $\ell_{2,0}$-norm constraint as an improvement. The $\ell_{2,0}$-norm constraint is a regularization method based on the $\ell_{2,0}$-norm, which effectively encourages the reduction of less significant features in the feature matrix, pushing them toward zero and thus achieving a sparse representation of features \cite{Towards13,pang2019efficient,du2018exploiting, li2017feature}. By incorporating the $\ell_{2,0}$-norm constraint, we can further enhance the performance of feature learning and improve the robustness against noisy data. The workflow of the GNMF\_$\ell_{20}$ model is shown in Figure \ref{fig-1}. It applies graph regularization and $\ell_{2,0}$-norm constraint on the basis of NMF, allowing the model to better mine and utilize the geometric information of the data space and extract important features. However, due to the non-convex and non-smooth nature of the $\ell_{2,0}$-norm constraint in the model, common convex optimization methods cannot solve this optimization problem.

To address this issue, this paper proposes an unsupervised feature learning framework based on GNMF\_$\ell_{20}$ and designs the corresponding PALM algorithm and its accelerated version to tackle this problem. The PALM algorithm is an alternating linear minimization method proposed for a class of non-convex and non-smooth problems that satisfy the Kurdyka-\L{ojasiewicz} (K\L) property \cite{bo2014proximal, fan2018matrix}. It optimizes the objective function by iteratively updating the characteristic matrix and weight matrix. Fortunately, our GNMF\_$\ell_{20}$ framework satisfies the K\L~property, so the PALM method can be used to solve our problem. 

In addition, there are other non-convex optimization algorithms, such as iPALM \cite{pock2016inertial} and BPL \cite{xu2017globally}. iPALM represents the inertial version of the PALM algorithm, while BPL is a non-convex optimization algorithm with global convergence based on block-coordinate updates. Both of the above methods add an extrapolation step to speed up the algorithm, inspired by these advancements, we develop a new accPALM acceleration algorithm based on the PALM method, which accelerates the convergence of the algorithm by introducing extrapolation during the iterative process and the degree of acceleration can be controlled by adjusting the extrapolation parameters. Additionally, we provide convergence analysis to guarantee the convergence and stability of the algorithm during the optimization process. We show that both the PALM and accPALM algorithms can converge to a critical point when solving the GNMF\_$\ell_{2,0}$ model.

Our main contributions are summarized as follows:
\begin{enumerate}
	\item Addressing the sensitivity of GNMF to noisy data, we introduce the $\ell_{2,0}$-norm constraint to enhance the sparsity of features, propose an unsupervised feature learning framework based on GNMF\_$\ell_{20}$.
	\item We propose a novel algorithm based on PALM and its accelerated version to solve the proposed GNMF\_$\ell_{20}$ model. Convergence analysis is provided to ensure the effectiveness and stability of the algorithm.
	\item We validate the effectiveness of the proposed method through experiments on both simulated and real-world datasets, compare it with other benchmark methods, and discuss the sparsity of features.
\end{enumerate}

\begin{figure}[h]
  \centering \includegraphics[width=1\linewidth]{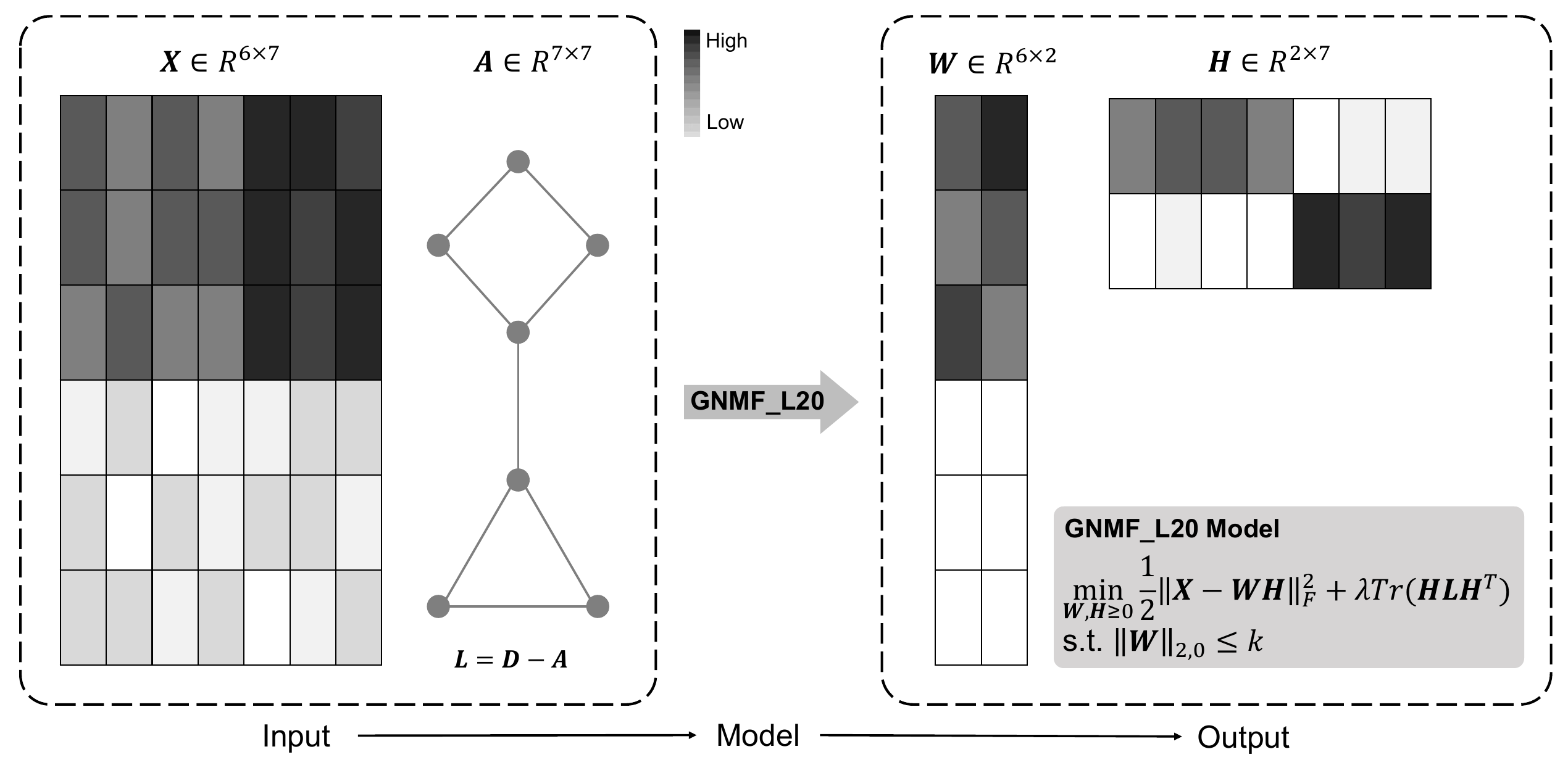}
  \caption{GNMF$\_\ell_{20}$ matrix factorization and clustering process.
  The input of GNMF$\_\ell_{20}$ is a two-dimensional matrix $\bm{X}$.
  The adjacency matrix $\bm{A}$ is constructed through the data matrix $\bm{X}$ to represent the correlation between data points.
  GNMF$\_\ell_{20}$ decomposes $\bm{X}$ into two low-dimensional matrices, namely the basis matrix $\bm{W}$ and the coefficient matrix $\bm{H}$.
  The $\bm{H}$ matrix is useful for sample clustering and visualization in low-dimensional space, while the $\hat{\bm{X}}=\bm{W}\bm{H}$ can be used for downstream analysis.
  }\label{fig-1}
\end{figure}

\section{Related work}
In this section, we present the model frameworks of NMF and GNMF along with relevant symbols and definitions.
\subsection{NMF}
NMF aims to decompose a non-negative data matrix $\boldsymbol{X}$ into the product of two non-negative matrices, $\boldsymbol{W}$ and $\boldsymbol{H}$, i.e., $\boldsymbol{X}\approx\boldsymbol{W}\boldsymbol{H}$. Here, $\boldsymbol{W}$ represents the matrix of latent feature representations of the samples, and $\boldsymbol{H}$ represents the matrix of weights for the features. The loss function of NMF is based on the reconstruction error using the Euclidean distance. The standard NMF algorithm employs the Frobenius norm to quantify the dissimilarity between the original matrix $\boldsymbol{X}$ and the reconstructed matrix $\boldsymbol{WH}$. Specifically, the Frobenius norm of the matrix $\boldsymbol{X}$, denoted as $\|\boldsymbol{X}\|_{F}$, is defined as the square root of the sum of squared elements of $\boldsymbol{X}$, represented as $\|\boldsymbol{X}\|_{F}^2=\sum_{ij}\boldsymbol{X}_{ij}^2$. Given a data matrix $\boldsymbol{X}=[x_{ij}] \in \mathbb{R}_{+}^{p \times n}$ with $p$ features and $n$ samples, the NMF model \cite{lee1999learning} can be expressed as:
\begin{equation}
\begin{split}
   & \underset{\boldsymbol{W}, \boldsymbol{H}}{\operatorname{min}}\ \|\boldsymbol{X}-\boldsymbol{W} \boldsymbol{H}\|_{F}^{2} \\
   & s.t.\ \boldsymbol{W} \in \mathbb{R}_{+}^{p \times r}, \boldsymbol{H} \in \mathbb{R}_{+}^{r \times n}.
\end{split}
\end{equation}

\subsection{GNMF}
NMF attempts to find a set of basis vectors to approximate the original data. The expectation is that these basis vectors can accurately capture the intrinsic Riemannian structure inherent in the data. To observe the local geometric structure of the original data, Cai et al. proposed GNMF \cite{cai2010graph}, which incorporates a k-nearest neighbor graph $\boldsymbol{A}$, known as the adjacency matrix. For a k-nearest neighbor graph, each sample selects its k nearest samples as its neighbors based on distances or similarities, and edges are placed between these two elements.

In some cases, weights can be assigned to the elements of the adjacency matrix to indicate the strength of connections or similarities between nodes. The definition of weights can be based on distance, similarity measures, or other domain knowledge. Three of the most frequently employed are as follows \cite{chen2023graph, cai2010graph}:
\begin{enumerate}
	\item \textbf{0-1 weight:} If two vertices, $\boldsymbol{x_j}$ and $\boldsymbol{x_l}$, are interconnected, their weight $a_{jl}$ is assigned a value of 1. Conversely, if they are not connected, $a_{jl}$ is set to 0.
    \begin{equation}
    a_{j l}=\left\{\begin{array}{cl}1  &\text{if }\boldsymbol{x}_{j} \text { and } \boldsymbol{x}_{l} \text { are connected},\\
    0  &\text{otherwise}.\end{array}\right.
    \end{equation}
	\item \textbf{Gaussian kernel weight:} The Gaussian kernel function is a distance-based function, assigning higher weights to data points that are closer in distance and lower weights to data points that are farther away.
    \begin{equation}
    a_{j l}=\left\{\begin{array}{cl}e^{-\frac{\left\|x_{l}-x_{j}\right\|^{2}}{2\sigma^{2}}} &\text{if }\boldsymbol{x}_{j} \text { and } \boldsymbol{x}_{l} \text { are connected}, \\
    0 &\text {otherwise}.
    \end{array}\right.
    \end{equation}
	\item \textbf{Dot-Product Weighting:} Dot product weighting entails evaluating the dot product of two vectors to gauge their similarity or degree of correlation. This similarity measure is then harnessed to allocate weights to the vectors. It's important to underscore that in scenarios where vector normalization is implemented, the dot product aligns with the cosine similarity between the vectors.
     \begin{equation}
    a_{j l}=\left\{\begin{array}{cl}x_{j}^{T}x_{l} &\text{if }\boldsymbol{x}_{j} \text{ and } \boldsymbol{x}_{l} \text{ are connected}, \\
    0 &\text{otherwise}.\end{array}\right.
    \end{equation}
\end{enumerate}
Unless otherwise specified, Gaussian kernel weighting is adopted in this study as the method for assessing similarities.

In the study of manifold learning, given a adjacency matrix $\boldsymbol{A} \in \mathbb{R}^{n \times n}$, the smoothness of the original data in the low-dimensional representation is expressed as:
\begin{equation}
    \mathcal{R} = \sum_{j, l=1}^{n}\left\|\boldsymbol{h}_{j}-\boldsymbol{h}_{l}\right\|^{2} \boldsymbol{a}_{j l}, \label{eq:5}
\end{equation}
where $\boldsymbol{a}_{j l}$ represents the number in the $j\text{-th}$ row and $l\text{-th}$ column of the adjacency matrix $\boldsymbol{A}$, denote $h_j$ as the $j$-th column of $\boldsymbol{H}$, $h_j =[h_{1j},...,h_{rj}]^T$.
The Eq.(\ref{eq:5}) can be simplified as:
\begin{equation}
\begin{split}
    \mathcal{R} &=\sum_{j, l=1}^{n}\left\|\boldsymbol{h}_{j}-\boldsymbol{h}_{l}\right\|^{2} \boldsymbol{a}_{j l} \\
    &=\sum_{j=1}^{n} \boldsymbol{h}_{j}^{T} \boldsymbol{h}_{j} \boldsymbol{d}_{j j}-\sum_{j, l=1}^{n} \boldsymbol{h}_{j}^{T} \boldsymbol{h}_{l} \boldsymbol{a}_{j l} \\
    &=\operatorname{Tr}(\boldsymbol{H} \boldsymbol{D} \boldsymbol{H}^{T})-\operatorname{Tr}(\boldsymbol{H} \boldsymbol{A} \boldsymbol{H}^{T}) \\
    &=\operatorname{Tr}(\boldsymbol{H} \boldsymbol{L} \boldsymbol{H}^{T}), \label{eq:6}
\end{split}
\end{equation}
where the $\operatorname{Tr}(\cdot)$ represents the trace of a matrix, and $\boldsymbol{D}$ is a diagonal matrix with entries representing the column sums of $\boldsymbol{A}$ (or row sums, since $\boldsymbol{A}$ is symmetric), denoted as $\boldsymbol{d}_{jj}=\sum_{l}\boldsymbol{a}_{jl}$. The matrix $\boldsymbol{L}=\boldsymbol{D}-\boldsymbol{A}$ is referred to as the graph Laplacian, which captures the graph structure of the data.

By incorporating the graph regularization penalty term into the NMF framework, the problem of GNMF can be formulated as follows:
\begin{equation}
\begin{split}
   & \underset{\boldsymbol{W}, \boldsymbol{H}}{\operatorname{min}}\ \|\boldsymbol{X}-\boldsymbol{W} \boldsymbol{H}\|_{F}^{2}+\lambda\operatorname{Tr}(\boldsymbol{HL}\boldsymbol{H}^T) \\
   & s.t.\ \boldsymbol{W} \in \mathbb{R}_{+}^{p \times r}, \boldsymbol{H} \in \mathbb{R}_{+}^{r \times n}.
\end{split}
\end{equation}
GNMF introduces graph regularization constraint on the basis of NMF, which enhances the performance of feature learning by mapping the similarities between samples onto the graph.

In summary, the GNMF framework combines the capability of feature learning with the graph regularization constraint, allowing for better discovery of the underlying structure and patterns in the data. However, traditional GNMF methods are sensitive to noisy data. To enhance sparsity and mitigate the impact of noise, we introduce the $\ell_{2,0}$-norm constraint, which will be discussed in detail in the following sections.

\section{Proposed framework}
In this section, a detailed introduction is provided for the formulation of the GNMF\_$\ell_{20}$ objective function, the PALM algorithm and its accelerated version, as well as the convergence analysis.
\subsection{GNMF$\_\ell_{20}$}
First, we define a function $I(x)$ as follows:
\begin{equation*}
    I(x) = \left\{\begin{array}{rl}
    1 & \text{if } x \not= 0,\\
    0 & \text{if } x = 0.
    \end{array} \right.
\end{equation*}
For a given matrix $\boldsymbol{W}\in \mathbb{R}^{p \times r}$, the $\ell_{2,0}$-norm represents the number of non-zero rows. Specifically, the $\ell_{2,0}$-norm of $\boldsymbol{W}$ is described as follows:
\begin{equation}
    \|\boldsymbol{W}\|_{2,0}=\sum_{i=1}^{p}I(\|w^{i}\|),
\end{equation}
where $w^{i}$ denote the $i\text{-th}$ row of $\boldsymbol{W}$, $\| \cdot \|$ is the $l_{2}$-norm and $\|x\| = \sum x_i^2$.

To integrate feature selection and graph regularization constraints in the NMF model, and enhance the sparsity and stability of the model, we introduce a row-sparse GNMF model with $\ell_{2,0}$-norm constraint (GNMF\_$\ell_{20}$):
\begin{equation}\label{eq:9}
\begin{split}
    & \underset{\boldsymbol{W}, \boldsymbol{H}}{\operatorname{min}} \ \frac{1}{2}\|\boldsymbol{X}-\boldsymbol{W} \boldsymbol{H}\|_{F}^{2}+\lambda \operatorname{Tr}(\boldsymbol{H} \boldsymbol{L} \boldsymbol{H}^{T})\\
    & s.t. \boldsymbol{W} \in \mathbb{R}_{+}^{p \times r},\|\boldsymbol{W}\|_{2,0} \leq k, \boldsymbol{H} \in \mathbb{R}_{+}^{r \times n}.
\end{split}
\end{equation}
By imposing the constraint $\|\boldsymbol{W}\|_{2,0} \leq k$ , we encourage sparsity in the rows of $\boldsymbol{W}$ and select the most important $k$ features.

\subsection{Proposed algorithm based on PALM}
PALM \cite{bo2014proximal} (Proximal Alternating Linearized Minimization) algorithm is a widely used optimization algorithm for solving a class of non-convex and non-smooth minimization problems.
The key idea of the PALM algorithm is to decompose the original problem into two subproblems and iteratively solve them alternatively.

In order to apply the PALM algorithm to solve our model, we reformulate the objective function of GNMF\_$\ell_{20}$ \eqref{eq:9} as follows:
\begin{equation}
    \underset{\boldsymbol{W}, \boldsymbol{H}}{\operatorname{min}} \ F(\boldsymbol{W},\boldsymbol{H})+ \delta_{\boldsymbol{W}\in \{\mathbb{R}_{+}^{p \times r},\|\boldsymbol{W}\|_{2,0} \leq k\}} + \delta_{\boldsymbol{H}\in \{\mathbb{R}_{+}^{r \times n}\}}, \label{eq:10}
\end{equation}
where $F(\boldsymbol{W},\boldsymbol{H})= \frac{1}{2}\|\boldsymbol{X}-\boldsymbol{W} \boldsymbol{H}\|_{F}^{2}+\lambda \operatorname{Tr}(\boldsymbol{H} \boldsymbol{L} \boldsymbol{H}^{T})$ , and the indicator function $\delta_{\boldsymbol{H}\in \{\mathbb{R}_{+}^{r \times n}\}}$ is defined as follows:
\begin{equation*}
    \delta_{\boldsymbol{H}\in \{\mathbb{R}_{+}^{r \times n}\}} = \left\{\begin{array}{cl}
    0  &\text{if } \boldsymbol{H}\in \{\mathbb{R}_{+}^{r \times n}\},\\
    +\infty  &\text{otherwise},
    \end{array} \right.
\end{equation*}
the indicator function $\delta_{\boldsymbol{W}\in \{\mathbb{R}_{+}^{p \times r},\|\boldsymbol{W}\|_{2,0} \leq k\}}$ has a similar definition. Furthermore, let:
\begin{equation*}
    f(\boldsymbol{W})=\delta_{\boldsymbol{W}\in \{\mathbb{R}_{+}^{p \times r},\|\boldsymbol{W}\|_{2,0} \leq k\}},
\end{equation*}
\begin{equation*}
    g(\boldsymbol{H})=\delta_{\boldsymbol{H}\in \{\mathbb{R}_{+}^{r \times n}\}}.
\end{equation*}
The objective function of GNMF\_$\ell_{20}$ \eqref{eq:9} can be transformed as follows:
\begin{equation}
    \underset{\boldsymbol{W}, \boldsymbol{H}}{\operatorname{min}} \ J(\boldsymbol{W},\boldsymbol{H}): =F(\boldsymbol{W},\boldsymbol{H})+f(\boldsymbol{W})+g(\boldsymbol{H}).
    \label{eq:11}
\end{equation}
Before giving the formal algorithm framework, we need to demonstrate that the PALM algorithm can be applied to our GNMF\_$\ell_{20}$ model.
Theorem 1 in \cite{min2022structured} shows that $F(\boldsymbol{W},\boldsymbol{H})$, $f(\boldsymbol{W})$ and $g(\boldsymbol{H})$ are semi-algebraic functions, therefore, $J(\boldsymbol{W},\boldsymbol{H})$ is also a semi-algebraic function. Furthermore, it is obvious that $J(\boldsymbol{W},\boldsymbol{H})$ is a proper and lower semicontinuous function. Combined with Theorem 3 of \cite{bo2014proximal}, we can know that $J(\boldsymbol{W},\boldsymbol{H})$ satisfies the K\L~property, which means that the PALM algorithm can be used to solve our GNMF\_$\ell_{20}$ model.

We first define the proximal map here, let $\sigma:\mathbb{R}^{d}\to (-\infty,\infty] $ be a proper and lower semicontinuous function. Given $x\in\mathbb{R}^{d}$ and $t>0$, the proximal map associated to $\sigma$ is defined by:
\begin{equation}
    \operatorname{prox}_{t}^{\sigma}(x):=\operatorname{argmin}\left\{\sigma(u)+\frac{t}{2}\|u-x\|^{2}: u \in \mathbb{R}^{d}\right\}.
    \label{eq:12}
\end{equation}
Further, when $\sigma$ is the indicator function of a nonempty and closed set $X$ as $\mathbb{R}_{+}^{r \times n}$, the proximal map \eqref{eq:12} reduces to the projection operator onto $X$, defined by:
\begin{equation}
    P_{X}(v):=\operatorname{argmin}\left\{\|u-v\|: u \in X\right\}.
\end{equation}

A fundamental algorithmic framework for solving Eq. (\ref{eq:11}) is presented in Algorithm \ref{alg-1}.
\begin{algorithm}[htbp]
    \caption{PALM algorithm for solving Eq. \eqref{eq:11}} \label{alg-1}
    \begin{algorithmic}[1] 
        \REQUIRE $\boldsymbol{X}\in \mathbb{R}^{p\times n}$.
        \ENSURE $\boldsymbol{W}\in \mathbb{R}^{p\times r} \text{ and }\boldsymbol{H}\in \mathbb{R}^{r\times n}$.
        \STATE Initialization:$(\boldsymbol{W}^{0}, \boldsymbol{H}^{0}), k=0$.
        \STATE \textbf{While} until convergence or stopping criterion \textbf{do}
        \STATE \quad Compute $c_k$
        \STATE \quad $\boldsymbol{W}^{k+1} \in \operatorname{prox}_{c_k}^{f}\left(\boldsymbol{W}^{k}-\frac{1}{c_k}\nabla_{W} F(\boldsymbol{W}^{k},\boldsymbol{H}^{k})\right)$
        \STATE \quad Compute $d_k$
        \STATE \quad $\boldsymbol{H}^{k+1} \in \operatorname{prox}_{d_k}^{g}\left(\boldsymbol{H}^{k}-\frac{1}{d_k}\nabla_{H} F(\boldsymbol{W}^{k+1},\boldsymbol{H}^{k})\right)$
        \STATE \quad $ k \leftarrow k+1$
        \RETURN $\boldsymbol{W}^{k}$ and $\boldsymbol{H}^{k}$
    \end{algorithmic}
\end{algorithm}
Based on Algorithm \ref{alg-1}, we also develop an accelerated proximal gradient method with adaptive momentum in Algorithm \ref{alg-2} to accelerate the convergence of the objective function. This algorithm combines the proximal gradient method and the extrapolation method to ensure that the objective function is monotonic and non-increasing. In each step, we compare the function value of J after the projected gradient descent of the extrapolated point and the original point respectively, and select a method with a smaller function value for the next iteration, and the extrapolation parameter $\beta$ change dynamically as the iteration proceeds.

\begin{algorithm}[htbp]
    \caption{accPALM algorithm with adaptive momentum} \label{alg-2}
    \begin{algorithmic}[1] 
        \REQUIRE $\boldsymbol{X}\in \mathbb{R}^{p\times n}$.
        \ENSURE $\boldsymbol{W}\in \mathbb{R}^{p\times r} \text{ and }\boldsymbol{H}\in \mathbb{R}^{r\times n}$.
        \STATE Initialization:$(\boldsymbol{W}^{-1}, \boldsymbol{H}^{-1})=(\boldsymbol{W}^{0}, \boldsymbol{H}^{0}),\beta_{0}\in\left[0, \beta_{\max }\right],\beta_{\max }<1 ,k=0, t>1$.
        \REPEAT
        \STATE $\widetilde{\boldsymbol{W}}^{k} = \boldsymbol{W}^{k}+\beta_{k}(\boldsymbol{W}^{k}-\boldsymbol{W}^{k-1})$
        
        \STATE compute $c_k$
        \STATE $\widetilde{\boldsymbol{W}}^{k+1} \in \operatorname{prox}_{c_k}^{f}\left(\widetilde{\boldsymbol{W}}^{k}-\frac{1}{c_k}\nabla_{\boldsymbol{W}}F(\widetilde{\boldsymbol{W}}^{k},\boldsymbol{H}^{k})\right)$
        \STATE $\widetilde{\boldsymbol{H}}^{k} = \boldsymbol{H}^{k}+\beta_{k}(\boldsymbol{H}^{k}-\boldsymbol{H}^{k-1})$
        \STATE compute $d_k$
        \STATE $ \widetilde{\boldsymbol{H}}^{k+1} \in \operatorname{prox}_{d_k}^{g}\left(\widetilde{\boldsymbol{H}}^{k}-\frac{1}{d_k}\nabla_{\boldsymbol{H}} F(\widetilde{\boldsymbol{W}}^{k+1},\widetilde{\boldsymbol{H}}^{k})\right)$
        \IF{$F(\widetilde{\boldsymbol{W}}^{k+1}, \widetilde{\boldsymbol{H}}^{k+1}) \leq F\left(\boldsymbol{W}^{k}, \boldsymbol{H}^{k}\right)-\rho_{0}\Vert(\bm{W}^{k+1}-\widetilde{\bm{W}}^{k},\bm{H}^{k+1}-\widetilde{\bm{H}}^{k}) \Vert^2$}
        \STATE Update $\beta_{k+1} = \min(t*\beta_{k},~\beta_{\max})$
        \STATE $\boldsymbol{W}^{k+1}=\widetilde{\boldsymbol{W}}^{k+1}, \boldsymbol{H}^{k+1}=\widetilde{\boldsymbol{H}}^{k+1}$
        \ELSE
        
        \STATE $\boldsymbol{W}^{k+1} \in \operatorname{prox}_{c_k}^{g}\left(\boldsymbol{W}^{k}-\frac{1}{d_k}\nabla_{\boldsymbol{W}} F(\boldsymbol{W}^{k},\boldsymbol{H}^{k})\right)$
        \STATE $\boldsymbol{H}^{k+1} \in \operatorname{prox}_{c_k}^{g}\left(\boldsymbol{H}^{k}-\frac{1}{c_k}\nabla_{\boldsymbol{H}} F(\boldsymbol{W}^{k+1},\boldsymbol{H}^{k})\right)$
        \STATE Update $\beta_{k+1}=\beta_{k}/t$
        \ENDIF
        \STATE \text{set }$k = k+1$
        \UNTIL $\frac{\Vert(\boldsymbol{W}_k,\boldsymbol{H}_k)-(\boldsymbol{W}_{k-1},\boldsymbol{H}_{k-1})\Vert}{\Vert(\boldsymbol{W}_{k-1},\boldsymbol{H}_{k-1})\Vert}<\epsilon$.
        \RETURN $\boldsymbol{W}:=\boldsymbol{W}_k$ and $\boldsymbol{H}:=\boldsymbol{H}_k$
    \end{algorithmic}
\end{algorithm}

Next, we will compute the partial derivatives of $F$ with respect to $\boldsymbol{W}$ and $\boldsymbol{H}$:
\begin{subequations}\label{eqn-10}
  \begin{align}
    \nabla_{H} F =&\boldsymbol{W}^{T} \boldsymbol{W} \boldsymbol{H}-\boldsymbol{W}^{T} \boldsymbol{X}+2 \lambda \boldsymbol{H} \boldsymbol{L}, \\
    \nabla_{W} &F =\boldsymbol{W} \boldsymbol{H} \boldsymbol{H}^{T}-\boldsymbol{X} \boldsymbol{H}^{T}.
  \end{align}
\end{subequations}
The Hessian matrices of F on $\bm{W}$ and $\bm{H}$ are as follows:
\begin{subequations}
\begin{align}
    \nabla_{H}^{2} F=&\boldsymbol{I}_{n} \otimes(\boldsymbol{W}^{T} \boldsymbol{W})+(2 \lambda \boldsymbol{L}^{T}) \otimes \boldsymbol{I}_{r}, \\
    &\nabla_{W}^{2} F=(\boldsymbol{H} \boldsymbol{H}^{T}) \otimes \boldsymbol{I}_{p},
\end{align}
\end{subequations}
where $I_p \in \mathbb{R}^{p \times p}$ is an identity matrix,and $\otimes$ is the Kronecker product.
We have known that $\nabla_W F$ and $\nabla_H F$ are Lipschitz continuous \cite{guan2012nenmf}, so the Lipschitz constant of $\nabla_{W} F$ is the largest singular value of $\nabla_{W}^{2} F$, $i.e.$, $L_{W}=\|\boldsymbol{H}\boldsymbol{H}^{T}\|_{2}$. Similarly, we can obtain that $L_{H}=\|\boldsymbol{W}^{T}\boldsymbol{W}^{T}\|_{2}+\|2\lambda\boldsymbol{L}^T\|_{2}$. The step size $c_k$ and $d_k$ in Algorithms \ref{alg-1} and \ref{alg-2} need to meet $c_k\geq L_W$ and $d_k\geq L_H$, and we can simply set $c_{k}=L_{W}$ and $d_{k}=L_{H}$.

Since both non-coupling terms in the equation are indicator functions, the proximal mapping operations in the iterative steps of $\boldsymbol{W}$ and $\boldsymbol{H}$ can be simplified to the following proximal projections:
\begin{equation}\label{eq:16}
    P_{f}(\overline{\boldsymbol{W}}):=\operatorname{argmin}\left\{\|\boldsymbol{W}-\overline{\boldsymbol{W}}\|: \boldsymbol{W} \in \{\mathbb{R}_{+}^{p \times r},\|\boldsymbol{W}\|_{2,0} \leq k\}\right\},
\end{equation}
\begin{equation}\label{eq:17}
    P_{g}(\overline{\boldsymbol{H}}):=\operatorname{argmin}\left\{\|\boldsymbol{H}-\overline{\boldsymbol{H}}\|: \boldsymbol{H} \in \{\mathbb{R}_{+}^{r \times n}\}\right\},
\end{equation}
where $\overline{\boldsymbol{W}}:=\boldsymbol{W}^{k}-\frac{1}{c_k}\nabla_{W} F(\boldsymbol{W}^{k},\boldsymbol{H}^{k})$,and $\overline{\boldsymbol{H}}:=\boldsymbol{H}^{k}-\frac{1}{d_k}\nabla_{H} F(\boldsymbol{W}^{k},\boldsymbol{H}^{k})$.

According to Proposition 2 and Proposition 3 in references \cite{min2022structured}, we can know the closed solution of Eq. (\ref{eq:16}) and (\ref{eq:17}) can be obtained as follows:
\begin{equation}
    P_{f}(\overline{\boldsymbol{W}}):=RS_{k}(P_{+}(\overline{\boldsymbol{W}})),
\end{equation}
\begin{equation}
    P_{g}(\overline{\boldsymbol{H}}):=P_{+}(\overline{\boldsymbol{H}}).
\end{equation}
Then we can give the algorithm \ref{alg-3} for solving Eq. (\ref{eq:9}):

\begin{algorithm}
    \caption{accPALM algorithm for solving Eq. (\ref{eq:9})} \label{alg-3}
    \begin{algorithmic}[1] 
        \REQUIRE $\boldsymbol{X}\in \mathbb{R}^{p\times n}$.
        \ENSURE $\boldsymbol{W}\in \mathbb{R}^{p\times r} \text{ and }\boldsymbol{H}\in \mathbb{R}^{r\times n}$.
        \STATE Initialization:$(\boldsymbol{W}^{-1}, \boldsymbol{H}^{-1})=(\boldsymbol{W}^{0}, \boldsymbol{H}^{0}),\beta_{0}\in\left[0, \beta_{\max }\right],\beta_{\max }<1 ,k=0$.
        \REPEAT
        \STATE $\widetilde{\boldsymbol{W}}^{k} = \boldsymbol{W}^{k}+\beta_{k}(\boldsymbol{W}^{k}-\boldsymbol{W}^{k-1})$
        \STATE $c_k=\|\boldsymbol{H}\boldsymbol{H}^{T}\|_{2}$
        \STATE $\boldsymbol{W}^{k+1} = RS_{k}\left(P_{+}\left(\widetilde{\boldsymbol{W}}^{k}-\frac{1}{c_k}\nabla_{\boldsymbol{W}}F(\widetilde{\boldsymbol{W}}^{k},\boldsymbol{H}^{k})\right)\right)$
        \STATE $\widetilde{\boldsymbol{H}}^{k} = \boldsymbol{H}^{k}+\beta_{k}(\boldsymbol{H}^{k}-\boldsymbol{H}^{k-1})$
        \STATE $d_k=\|\boldsymbol{W}^{T}\boldsymbol{W}^{T}\|_{2}+\|2\lambda\boldsymbol{L}^T\|_{2}$
        \STATE $\boldsymbol{H}^{k+1} = P_{+}\left(\widetilde{\boldsymbol{H}}^{k}-\frac{1}{d_k}\nabla_{\boldsymbol{H}} F(\boldsymbol{W}^{k+1},\widetilde{\boldsymbol{H}}^{k})\right)$
        \IF {$F(\boldsymbol{W}^{k+1}, \boldsymbol{H}^{k+1}) \leq F\left(\boldsymbol{W}^{k}, \boldsymbol{H}^{k}\right)-\rho_{0}\Vert(\bm{W}^{k+1}-\widetilde{\bm{W}}^{k},\bm{H}^{k+1}-\widetilde{\bm{H}}^{k}) \Vert^2$}
        \STATE Compute $\beta_{k+1}$
        \ELSE
        \STATE $\widetilde{\boldsymbol{H}}^{k} = \boldsymbol{H}^{k}, \widetilde{\boldsymbol{W}}^{k} = \boldsymbol{W}^{k}$
        \STATE repeat 5 and 8, compute $\beta_{k+1}$
        \ENDIF
        \STATE Set $k = k+1$
        \UNTIL $\frac{\Vert(\boldsymbol{W}^k,\boldsymbol{H}^k)-(\boldsymbol{W}^{k-1},\boldsymbol{H}^{k-1})\Vert}{\Vert(\boldsymbol{W}^{k-1},\boldsymbol{H}^{k-1})\Vert}<\epsilon$.
        \RETURN $\boldsymbol{W}:=\boldsymbol{W}^k$ and $\boldsymbol{H}:=\boldsymbol{H}^k$
    \end{algorithmic}
\end{algorithm}
In steps 10 and 13 of Algorithm \ref{alg-3}, the update method for the extrapolation parameter $\beta$ is not explicitly provided. We can choose to fix $\beta$, or dynamically adapt the parameters during the iteration process, similar to Algorithm 2. Through experiments detailed in the subsequent sections, we observe that adaptively updating the parameter $\beta$ yields improved results. Unless otherwise specified, we will employ the dynamic update scheme for $\beta$ in the following sections.

\subsection{Convergence analysis}
The convergence proof for the PALM algorithm has been presented in \cite{bo2014proximal}. Our GNMF\_$\ell_{20}$ framework adheres to the basic assumptions of PALM, ensuring that the convergence properties remain consistent with those established in \cite{bo2014proximal}. Furthermore, we provide a convergence proof for the accPALM Algorithm (\ref{alg-3}), the main ideas of the proof are similar to \cite{yang2023accelerated}. 
Before proceeding with the formal proof of Algorithm \ref{alg-3}, we introduce several foundational assumptions \cite{yang2022some}.

\begin{assumption}\label{assumption-1}
    (i) $f: \mathbb{R}^{p\times r} \rightarrow(-\infty,+\infty]$ and $g: \mathbb{R}^{r\times n} \rightarrow(-\infty,+\infty]$ are proper and lower semicontinuous functions.\\
    (ii) $\inf_{\mathbb{R}^{p\times r} \times\mathbb{R}^{r\times n}}J>-\infty$, $\inf_{\mathbb{R}^{p\times r}} f>-\infty \text{ and } \inf _{\mathbb{R}^{r\times n}}g>-\infty$.\\
    (iii) $F:\mathbb{R}^{p\times r}\times \mathbb{R}^{r\times n}\rightarrow\mathbb{R}$ is a $C^1$ function. The partial gradient $\nabla_{W} F\left(W, H\right)$ is globally Lipschitz with moduli $L_W$, and for any $W_{1}, W_{2} \in \mathbb{R}^{p \times r}$, we have
    \begin{equation}
        \left\|\nabla_{W} F\left(W_{1}, H\right)-\nabla_{W} F\left(W_{2}, H\right)\right\| \leq L_W\left\|W_{1}-W_{2}\right\|.
    \end{equation}
    Likewise, we have a similar conclusion for $\nabla_{H} F\left(W, H\right)$ and $L_H$.
\end{assumption}

\begin{lem}\label{lem-1}
    \text{(Convergence properties)} Suppose that Assumption \ref{assumption-1} holds. Let $\{(\boldsymbol{W}^k,\boldsymbol{H}^k)\}$ be a sequence generated by Algorithm \ref{alg-3}. The following assertions hold.\\
    (i) The sequences $J{(\boldsymbol{W}^k,\boldsymbol{H}^k)}$ is monotonically nonincreasing and in particular
    \begin{equation}\label{eq:21}
    \begin{split}
        &J(W^{k+1},H^{k+1}) \\
        &\leq J(\widetilde{W}^{k},\widetilde{H}^{k})-\rho_{0}\Vert(W^{k+1}-\widetilde{W}^{k},H^{k+1}-\widetilde{H}^{k}) \Vert^2,
    \end{split}
    \end{equation}
    where $\rho_{0}=\min{\{\frac{1}{2}(c_{k}-L_W),\frac{1}{2}(d_{k}-L_H)}\}$.\\
    (ii) We have
    \begin{equation}
        \lim_{k\to\infty} \Vert W^{k+1}-\widetilde{W}^{k} \Vert =0, 
        \lim_{k\to\infty} \Vert H^{k+1}-\widetilde{H}^{k} \Vert =0.
    \end{equation}
\end{lem}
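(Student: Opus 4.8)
The plan is to derive both assertions from a single per-iteration sufficient-decrease estimate and then obtain (ii) by a telescoping argument. I would first treat a \emph{non-extrapolated} proximal step, i.e.\ the case $\widetilde{\boldsymbol{W}}^{k}=\boldsymbol{W}^{k}$, $\widetilde{\boldsymbol{H}}^{k}=\boldsymbol{H}^{k}$ arising in the ELSE branch, where the iteration is an ordinary PALM step. By the definition of the proximal map \eqref{eq:12}, $\boldsymbol{W}^{k+1}$ minimizes $f(\boldsymbol{W})+\tfrac{c_k}{2}\|\boldsymbol{W}-(\widetilde{\boldsymbol{W}}^{k}-\tfrac{1}{c_k}\nabla_{W}F)\|^{2}$; evaluating this objective at the feasible incumbent $\widetilde{\boldsymbol{W}}^{k}$ gives
\begin{equation*}
f(\boldsymbol{W}^{k+1})+\langle \nabla_{W}F,\boldsymbol{W}^{k+1}-\widetilde{\boldsymbol{W}}^{k}\rangle+\tfrac{c_k}{2}\|\boldsymbol{W}^{k+1}-\widetilde{\boldsymbol{W}}^{k}\|^{2}\le f(\widetilde{\boldsymbol{W}}^{k}).
\end{equation*}
Combining this with the descent lemma for $F(\cdot,\boldsymbol{H}^{k})$ (legitimate since $\nabla_{W}F$ is $L_W$-Lipschitz by Assumption~\ref{assumption-1}(iii)) and invoking $c_k\ge L_W$ yields a block decrease of $\tfrac12(c_k-L_W)\|\boldsymbol{W}^{k+1}-\widetilde{\boldsymbol{W}}^{k}\|^{2}$. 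Repeating for the $\boldsymbol{H}$-update with modulus $L_H$ and $d_k\ge L_H$, and chaining the two bounds through the common value $F(\boldsymbol{W}^{k+1},\boldsymbol{H}^{k})$, reproduces \eqref{eq:21} with $\rho_{0}=\min\{\tfrac12(c_k-L_W),\tfrac12(d_k-L_H)\}$.

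For the \emph{extrapolated} (IF) branch the explicit acceptance test is exactly the required sufficient-decrease condition: since $\boldsymbol{W}^{k+1}$ and $\boldsymbol{H}^{k+1}$ are projections onto the feasible sets, $f(\boldsymbol{W}^{k+1})=g(\boldsymbol{H}^{k+1})=0$, so $J=F$ along accepted iterates and the test upgrades to $J(\boldsymbol{W}^{k+1},\boldsymbol{H}^{k+1})\le J(\boldsymbol{W}^{k},\boldsymbol{H}^{k})-\rho_{0}\|(\boldsymbol{W}^{k+1}-\widetilde{\boldsymbol{W}}^{k},\boldsymbol{H}^{k+1}-\widetilde{\boldsymbol{H}}^{k})\|^{2}$. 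Thus in both branches the accepted pair obeys the same decrease relative to the feasible incumbent $J(\boldsymbol{W}^{k},\boldsymbol{H}^{k})$, which establishes the monotonicity in (i). Summing this inequality over $k=0,\dots,N$ telescopes to $\rho_{0}\sum_{k=0}^{N}\|(\boldsymbol{W}^{k+1}-\widetilde{\boldsymbol{W}}^{k},\boldsymbol{H}^{k+1}-\widetilde{\boldsymbol{H}}^{k})\|^{2}\le J(\boldsymbol{W}^{0},\boldsymbol{H}^{0})-\inf J$, which is finite by Assumption~\ref{assumption-1}(ii). Letting $N\to\infty$, the series converges, its general term vanishes, and both limits in (ii) follow at once.

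The main obstacle I anticipate is handling the extrapolation rigorously. The extrapolated iterates $\widetilde{\boldsymbol{W}}^{k}$ and $\widetilde{\boldsymbol{H}}^{k}$ need not be feasible—extrapolation can break nonnegativity and violate the cardinality bound $\|\boldsymbol{W}\|_{2,0}\le k$—so $f(\widetilde{\boldsymbol{W}}^{k})$ or $g(\widetilde{\boldsymbol{H}}^{k})$ may equal $+\infty$, and one cannot run the prox-plus-descent estimate at the extrapolated point. In addition, because the $\boldsymbol{W}$-step linearizes $F$ at $(\widetilde{\boldsymbol{W}}^{k},\boldsymbol{H}^{k})$ while the $\boldsymbol{H}$-step linearizes at $(\boldsymbol{W}^{k+1},\widetilde{\boldsymbol{H}}^{k})$, the two block bounds fail to telescope through a common value once $\boldsymbol{H}^{k}\ne\widetilde{\boldsymbol{H}}^{k}$. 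This is precisely why the algorithm measures decrease against the feasible incumbent $F(\boldsymbol{W}^{k},\boldsymbol{H}^{k})$ via the explicit test and restarts with extrapolation switched off otherwise; the delicate point of the write-up is to argue that this accept/restart mechanism enforces the unified decrease inequality without ever requiring a descent estimate at the possibly infinite-valued extrapolated point.
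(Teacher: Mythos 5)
Your proposal is correct and follows essentially the same route as the paper's own proof: per-block sufficient decrease for the non-extrapolated branch (the paper cites Lemma 2 of its PALM reference, where you re-derive it from the prox definition plus the descent lemma), the explicit acceptance test of step 9 for the extrapolated branch, and a telescoping sum bounded by $J(W^{0},H^{0})-J^{*}$ to obtain (ii). If anything you are more careful than the paper, whose derivation of Eq.~\eqref{eq:21} formally invokes the sufficient-decrease lemma at the extrapolated point $(\widetilde{W}^{k},\widetilde{H}^{k})$ --- where $f(\widetilde{W}^{k})$ may be $+\infty$ and where Algorithm~\ref{alg-3} actually evaluates $\nabla_{W}F$ at $(\widetilde{W}^{k},H^{k})$ rather than at $(\widetilde{W}^{k},\widetilde{H}^{k})$ --- before falling back, exactly as you do, on the acceptance test and the degenerate non-extrapolated case to conclude monotonicity.
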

\begin{proof}
    (i) According to Lemma 2 of reference \cite{bo2014proximal} and the iterative format of $W$ and $H$, we can draw the following conclusions:
    \begin{equation}
        \begin{split}
            &F(W^{k+1},\widetilde{H}^{k})+f(W^{k+1})\\
            &\le F(\widetilde{W}^{k},\widetilde{H}^{k})+f(\widetilde{W}^{k})-\frac{1}{2}(c_{k}-L_W)\Vert W^{k+1}-\widetilde{W}^{k} \Vert^{2},
        \end{split}
    \end{equation}
    \begin{equation}
        \begin{split}
           &F(W^{k+1},H^{k+1})+g(H^{k+1})\\
           &\leq F(W^{k+1},\widetilde{H}^{k})+g(\widetilde{H}^{k})-\frac{1}{2}(d_{k}-L_H)\Vert H^{k+1}-\widetilde{H}^{k} \Vert^{2}.
        \end{split}
    \end{equation}
    Adding the above two inequalities, we get
    \begin{equation}\label{eq:25}
    \begin{split}
        &J(W^{k+1},H^{k+1})\\
        &\leq J(\widetilde{W}^{k},\widetilde{H}^{k})-\rho_{0}\Vert(W^{k+1}-\widetilde{W}^{k},H^{k+1}-\widetilde{H}^{k}) \Vert^2.
    \end{split}
    \end{equation}
From Algorithm \ref{alg-3}, we can observe that the updates of $W^{k+1}$ and $H^{k+1}$ are derived from either steps 5 and 8 or step 13 in Algorithm \ref{alg-3}. Let us consider two cases. If the updates of $W^{k+1}$ and $H^{k+1}$ are obtained from steps 5 and 8, the conditional statement (step 9) implies that the objective function decreases in this iteration. If the updates of $W^{k+1}$ and $H^{k+1}$ are obtained from step 13, then Eq. (\ref{eq:25}) will degenerate into 
\begin{equation}\label{eq:26}
    \begin{split}
        &J(W^{k+1},H^{k+1})\\
        &\leq J(W^{k},H^{k})-\rho_{0}\Vert(W^{k+1}-\widetilde{W}^{k},H^{k+1}-\widetilde{H}^{k}) \Vert^2.
    \end{split}
\end{equation}
It is evident that the objective function also remains non-increasing in this iteration. Thus, we can conclude that the objective function value is non-increasing as the iterations progress. And because the objective function is lower bounded (see Assumption \ref{assumption-1}), the sequence $J(W^{k},H^{k})$ is convergent.\\
(ii)According to the conclusion of Lemma \ref{lem-1}(i) and the conditional statement (step 9), we know that no matter which step $W^{k+1}$ and $H^{k+1}$ are obtained from, Eq. (\ref{eq:26}) holds, that is
\begin{equation}\label{eq:28}
    \begin{split}
        &\rho_{0}\Vert(W^{k+1}-\widetilde{W}^{k},H^{k+1}-\widetilde{H}^{k}) \Vert^2\\
        &\leq J(W^{k},H^{k})-J(W^{k+1},H^{k+1}).
    \end{split}
\end{equation}
The above Eq. (\ref{eq:28}) can be accumulated from $k = 0$ to $K$,
\begin{equation*}
    \begin{split}
        &\sum_{k=0}^{K} \rho_{0}\Vert\left(W^{k+1}-\widetilde{W}^{k}, H^{k+1}-\widetilde{H}^{k}\right)\Vert^{2}\\
        &\leq J\left(W^{0}, H^{0}\right)-J\left(W^{K+1}, H^{K+1}\right)\\
        &\leq J\left(W^{0}, H^{0}\right)-J^{*}<\infty.
    \end{split}
\end{equation*}
Let $K\rightarrow\infty$, we know
\begin{equation*}
    \lim_{k \to \infty}\rho_{0}\Vert\left(W^{k+1}-\widetilde{W}^{k}, H^{k+1}-\widetilde{H}^{k}\right)\Vert^{2}=0,
\end{equation*}
hence
\begin{equation}
    \lim_{k\to\infty} \Vert W^{k+1}-\widetilde{W}^{k} \Vert =0, 
    \lim_{k\to\infty} \Vert H^{k+1}-\widetilde{H}^{k} \Vert =0.
\end{equation}
This means the Lemma \ref{lem-1} is true.
\end{proof}

\begin{lem}\label{lem-2}
    \text{(A subgradient lower bound for the iterates gap)} Suppose that Assumption 1 hold. Let $\{(\boldsymbol{W}^k,\boldsymbol{H}^k)\}$, $\{(\widetilde{\boldsymbol{W}}^k,\widetilde{\boldsymbol{H}}^k)\}$ be the sequences generated by Algorithm 2. For each integer $k\geq1$, define
    \begin{equation}
        \begin{split}
            p_{W}^{k+1} &= \nabla_{W}F(W^{k+1},H^{k+1}) - \nabla_{W}F(\widetilde{W}^{k},\widetilde{H}^{k}) \\
            &+ c_{k}(\widetilde{W}^{k}-W^{k+1}),
        \end{split}
    \end{equation}
    \begin{equation}
        \begin{split}
            p_{H}^{k+1} &= \nabla_{H}F(W^{k+1},H^{k+1}) - \nabla_{H}F(W^{k+1},\widetilde{H}^{k}) \\
            &+ d_{k}(\widetilde{H}^{k}-H^{k+1}),
        \end{split}
    \end{equation}
    then $q_{W}^{k+1} \in \partial_{W}J(W^{k+1},H^{k+1})$, $p_{H}^{k+1} \in \partial_{H}J(W^{k+1},H^{k+1})$ and there exists $\rho > 0$ such that
    \begin{equation}
        \Vert(p_{W}^{k+1},p_{H}^{k+1})\Vert\leq \rho\Vert (W^{k+1}-\widetilde{W}^{k},H^{k+1}-\widetilde{H}^{k})\Vert.
    \end{equation}
\end{lem}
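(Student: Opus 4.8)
The plan is to read off the two inclusions from the first-order optimality conditions of the proximal subproblems, and then to turn those inclusions into the norm estimate using the triangle inequality together with the Lipschitz continuity supplied by Assumption~\ref{assumption-1}(iii). I will work with the non-bold symbols $W,H,F$ used in the lemma statement.

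\textbf{Step 1 (optimality of the proximal steps).} First I would record what it means for $W^{k+1}$ to be a minimizer of its proximal subproblem. Since $W^{k+1}\in\operatorname{prox}_{c_k}^{f}(\widetilde{W}^{k}-\tfrac{1}{c_k}\nabla_{W}F(\widetilde{W}^{k},\widetilde{H}^{k}))$, the point $W^{k+1}$ minimizes $f(W)+\tfrac{c_k}{2}\|W-\widetilde{W}^{k}+\tfrac{1}{c_k}\nabla_{W}F(\widetilde{W}^{k},\widetilde{H}^{k})\|^{2}$. Applying Fermat's rule $0\in\partial(\cdot)$ at the minimizer and rearranging gives
\[
c_k(\widetilde{W}^{k}-W^{k+1})-\nabla_{W}F(\widetilde{W}^{k},\widetilde{H}^{k})\in\partial f(W^{k+1}).
\]
The identical computation for the $H$-update produces $d_k(\widetilde{H}^{k}-H^{k+1})-\nabla_{H}F(W^{k+1},\widetilde{H}^{k})\in\partial g(H^{k+1})$.

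\textbf{Step 2 (subgradient membership).} Because $J=F+f+g$ with $F$ of class $C^1$ and $f,g$ depending on disjoint variable blocks, the subdifferential sum rule for the sum of a smooth and a proper lower semicontinuous function yields $\partial_{W}J(W^{k+1},H^{k+1})=\nabla_{W}F(W^{k+1},H^{k+1})+\partial f(W^{k+1})$. Adding $\nabla_{W}F(W^{k+1},H^{k+1})$ to the inclusion from Step~1 gives exactly $p_{W}^{k+1}\in\partial_{W}J(W^{k+1},H^{k+1})$, and the analogous manipulation with $g$ gives $p_{H}^{k+1}\in\partial_{H}J(W^{k+1},H^{k+1})$. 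This settles the membership claims (the $q_{W}^{k+1}$ in the statement being a typo for $p_{W}^{k+1}$).

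\textbf{Step 3 (norm estimate).} For the bound I would apply the triangle inequality to each component. The term $p_{H}^{k+1}$ is easy: its two gradient arguments differ only in the $H$-block, so Assumption~\ref{assumption-1}(iii) gives $\|p_{H}^{k+1}\|\le(L_H+d_k)\|H^{k+1}-\widetilde{H}^{k}\|$ directly. For $p_{W}^{k+1}$ the two arguments $(W^{k+1},H^{k+1})$ and $(\widetilde{W}^{k},\widetilde{H}^{k})$ differ in both blocks, so I would insert the intermediate point $(\widetilde{W}^{k},H^{k+1})$, bound the first gradient difference by $L_W\|W^{k+1}-\widetilde{W}^{k}\|$ via Lipschitz continuity in $W$, bound the second by a Lipschitz estimate in $H$, and absorb the $c_k\|\widetilde{W}^{k}-W^{k+1}\|$ term. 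Since each of $\|W^{k+1}-\widetilde W^k\|$ and $\|H^{k+1}-\widetilde H^k\|$ is dominated by $\|(W^{k+1}-\widetilde W^k,H^{k+1}-\widetilde H^k)\|$, choosing $\rho$ larger than $L_W+c_k$, $L_H+d_k$, and the cross-Lipschitz modulus yields
\[
\|(p_{W}^{k+1},p_{H}^{k+1})\|\le\rho\,\|(W^{k+1}-\widetilde{W}^{k},H^{k+1}-\widetilde{H}^{k})\|.
\]

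\textbf{Main obstacle.} The delicate point is the mixed term $\nabla_{W}F(\widetilde{W}^{k},H^{k+1})-\nabla_{W}F(\widetilde{W}^{k},\widetilde{H}^{k})$: because $\nabla_{W}F=WHH^{T}-XH^{T}$ is quadratic in $H$, it is Lipschitz in $H$ only on bounded sets, so Assumption~\ref{assumption-1}(iii) alone does not supply this estimate. I would therefore first argue that the generated sequence $\{(W^{k},H^{k})\}$ stays in a bounded region --- using the monotone decrease of $J$ from Lemma~\ref{lem-1}(i) together with the coercivity of $J$ on the feasible set --- and then invoke Lipschitz continuity of $\nabla F$ on that bounded region to produce a uniform modulus. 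The same boundedness guarantees that the step sizes $c_k=\|HH^{T}\|_{2}$ and $d_k=\|W^{T}W\|_{2}+\|2\lambda L^{T}\|_{2}$ remain uniformly bounded, which is precisely what makes $\rho$ a single finite constant independent of $k$.
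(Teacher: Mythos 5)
Your proposal follows essentially the same route as the paper's own proof: Fermat's rule applied to the proximal subproblem gives $c_k(\widetilde{W}^{k}-W^{k+1})-\nabla_{W}F(\widetilde{W}^{k},\widetilde{H}^{k})\in\partial f(W^{k+1})$, the sum rule for smooth-plus-nonsmooth functions converts this into the two membership claims (and yes, $q_{W}^{k+1}$ in the statement is a typo for $p_{W}^{k+1}$), and the norm estimate comes from the triangle inequality plus Lipschitz bounds, with the $H$-component handled directly by Assumption~\ref{assumption-1}(iii). The one point where you go beyond the paper is instructive: for the mixed term $\nabla_{W}F(W^{k+1},H^{k+1})-\nabla_{W}F(\widetilde{W}^{k},\widetilde{H}^{k})$ the paper simply asserts that ``there exists $M>0$'' bounding it by $M(\Vert W^{k+1}-\widetilde{W}^{k}\Vert+\Vert H^{k+1}-\widetilde{H}^{k}\Vert)$, which indeed does not follow from Assumption~\ref{assumption-1} alone --- exactly the obstacle you flag, and in the original PALM paper it is covered by a separate hypothesis (Lipschitz continuity of $\nabla F$ on bounded sets together with boundedness of the generated sequence). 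One caveat on your proposed patch: boundedness of the iterates cannot be obtained from coercivity of $J$, because $J$ is not coercive here --- rescaling $(W,H)\mapsto(tW,H/t)$ with $t\to\infty$ keeps $J$ bounded while the iterate norm diverges, and the rescaled pair stays feasible for both the nonnegativity and the $\ell_{2,0}$ constraints. So boundedness must be imposed as an assumption or argued from the specific structure of the updates; this is a gap your write-up shares with (and at least makes visible, unlike) the paper's own argument.
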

\begin{proof}
    From the iterative scheme of $W$, we can get
    \begin{equation}
        \begin{split}
            W^{k+1}&\in\underset{W\in\mathbb{R}^{p\times r}}{\arg\min}\{\langle W-\widetilde{W}^{k},\nabla_{W}F(\widetilde{W}^{k},\widetilde{H}^{k})\rangle\\
            &+\frac{c_{k}}{2}\Vert W-\widetilde{W}^{k}\Vert+f(W)\},
        \end{split}
    \end{equation}
    By optimality condition yields, we have
    \begin{equation}
        \nabla_{W}F(\widetilde{W}^{k},\widetilde{H}^{k})+c_{k}(W^{k+1}-\widetilde{W}^{k})+u^{k+1}=0,
    \end{equation}
    where $u^{k+1}\in \partial f(W^{k+1})$. It follows that
    \begin{equation}
        \begin{split}
            p_{W}^{k+1} &= \nabla_{W}F(W^{k+1},H^{k+1})-\nabla_{W}F(\widetilde{W}^{k},\widetilde{H}^{k})\\
            &+c_{k}(\widetilde{W}^{k}-W^{k+1})\\
            &=\nabla_{W}F(W^{k+1},H^{k+1})+u^{k+1}\in\partial_{W}J(W^{k+1},H^{k+1}).
        \end{split}
    \end{equation}
    In a similar way, we know
    \begin{equation}
        \begin{split}
            p_{H}^{k+1} &= \nabla_{H}F(W^{k+1},H^{k+1})-\nabla_{H}F(W^{k+1},\widetilde{H}^{k})\\
            &+d_{k}(\widetilde{H}^{k}-H^{k+1})\\
            &=\nabla_{H}F(W^{k+1},H^{k+1})+v^{k+1}\in\partial_{H}J(W^{k+1},H^{k+1}),
        \end{split}
    \end{equation}
    where $v^{k+1}\in \partial g(H^{k+1})$.
    There exists $M>0$ such that
     \begin{equation*}
        \begin{split}
            \Vert p_{W}^{k+1}\Vert&\leq c_{k}\Vert \widetilde{W}^{k}-W^{k+1}\Vert+\Vert \nabla_{W}F(W^{k+1},H^{k+1})\\
            &-\nabla_{W}F(\widetilde{W}^{k},\widetilde{H}^{k})\Vert\\
            &\leq c_{k}\Vert \widetilde{W}^{k}-W^{k+1}\Vert+M(\Vert W^{k+1}-\widetilde{W}^{k}\Vert\\
            &+\Vert H^{k+1}-\widetilde{H}^{k}\Vert)\\
            &=(M+c_k)\Vert \widetilde{W}^{k}-W^{k+1}\Vert+M\Vert H^{k+1}-\widetilde{H}^{k}\Vert\\
            &\leq (2M+c_{k})\Vert (W^{k+1}-\widetilde{W}^{k},H^{k+1}-\widetilde{H}^{k})\Vert.
        \end{split}
    \end{equation*}
    By the same way, because $\nabla_{H}F$ is Lipschitz continuous, we have
    \begin{equation*}
        \begin{split}
            \Vert p_{H}^{k+1}\Vert&\leq d_{k}\Vert\widetilde{H}^{k}-H^{k+1}\Vert+\Vert\nabla_{H}F(W^{k+1},H^{k+1})\\
            &- \nabla_{H}F(W^{k+1},\widetilde{H}^{k})\Vert\\
            &\leq d_{k}\Vert \widetilde{H}^{k}-H^{k+1}\Vert+d_{k}\Vert H^{k+1}-\widetilde{H}^{k}\Vert\\
            &=2d_k\Vert H^{k+1}-\widetilde{H}^{k}\Vert,
        \end{split}
    \end{equation*}
    hence
    \begin{equation}
        \begin{split}
            \Vert (p_{W}^{k+1},p_{H}^{k+1})\Vert&\leq \Vert p_{W}^{k+1}\Vert+\Vert p_{H}^{k+1}\Vert\\
            &\leq\rho\Vert (W^{k+1}-\widetilde{W}^{k},H^{k+1}-\widetilde{H}^{k})\Vert.
        \end{split}
    \end{equation}
    This means the Lemma \ref{lem-2} is true.
\end{proof}

In the following outcomes, we summarize several properties of the set of limit points. For convenience, we often use the notation $z^{k}:=\left(W^{k}, H^{k}\right), \forall k\geq0$ in the following text. We first give the definition of the limit points set, let $\{z^k\}$, where $k\in \mathbb{N}$, be the sequence generated by Algorithm \ref{alg-3} from the initial point $z^0$. The collection of all limit points is denoted by $\omega(z^0)$, which is defined as follows:
\begin{equation}
    \begin{split}
        &\omega\left(z^{0}\right)=\{\bar{z}=(\bar{W}, \bar{H}) \in \mathbb{R}^{p\times r} \times \mathbb{R}^{r\times n}:\\
        &\exists\text{ a sequence of integers } \left\{k_{j}\right\}_{j \in \mathbb{N}}\text{such that }z^{k_{j}} \rightarrow \bar{z}, j \rightarrow \infty\}.  
    \end{split}
\end{equation}
\begin{lem}\label{lem-3}
    (Subsequence convergence) Let Assumption \ref{assumption-1} holds, $\{z_k\}$ is the sequence generated by Algorithm \ref{alg-3}, the following assertion hold.\\
    (i) $\emptyset \neq \omega(z^{0})\subset\text{crit}J$.\\
    (ii) $\omega(z^{0})$ is a nonempty and compact set, and the objective function $J$ is finite and constant on $\omega(z^{0})$.\\
    (iii) We have
    \begin{equation*}
        \lim_{k\to\infty}dist(z^k,\omega(z^{0}))=0.
    \end{equation*}
\end{lem}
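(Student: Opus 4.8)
The plan is to follow the now-standard subsequence-convergence argument for PALM-type schemes (as in \cite{bo2014proximal} and its accelerated variant \cite{yang2023accelerated}), leaning on the two lemmas already established. The whole argument rests on the sequence $\{z^k\}$ being bounded, so I would first record this, either as a standing hypothesis or by noting that the constraint sets together with the descent property of Lemma~\ref{lem-1}(i) confine the iterates to a bounded level region. Boundedness immediately yields nonemptiness of $\omega(z^0)$ via Bolzano--Weierstrass, settling the first half of (i) and the nonemptiness asserted in (ii).

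For the inclusion $\omega(z^0)\subset\mathrm{crit}\,J$ in (i), fix $\bar z=(\bar W,\bar H)\in\omega(z^0)$ and a subsequence $z^{k_j}\to\bar z$. From Lemma~\ref{lem-1}(ii), $\|W^{k+1}-\widetilde W^k\|\to0$ and $\|H^{k+1}-\widetilde H^k\|\to0$, so the subgradient bound of Lemma~\ref{lem-2} forces $(p_W^{k_j},p_H^{k_j})\to 0$ while $(p_W^{k_j},p_H^{k_j})\in\partial J(z^{k_j})$. The desired conclusion $0\in\partial J(\bar z)$ then follows from the closedness of the limiting subdifferential, provided I can also establish that $J(z^{k_j})\to J(\bar z)$.

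Recovering this value convergence is the crux, and I expect it to be the main obstacle, since $f$ and $g$ are discontinuous indicator functions and lower semicontinuity alone only gives $\liminf_j J(z^{k_j})\ge J(\bar z)$. I would supply the reverse inequality by exploiting the defining minimality of the proximal updates: since $W^{k+1}$ minimizes $\langle W-\widetilde W^k,\nabla_W F(\widetilde W^k,\widetilde H^k)\rangle+\tfrac{c_k}{2}\|W-\widetilde W^k\|^2+f(W)$, testing this minimizer against the candidate $\bar W$ produces an upper bound on $f(W^{k_j})$ which, after passing to the limit (using $\widetilde W^{k_j}\to\bar W$, a consequence of $\|W^{k+1}-\widetilde W^k\|\to0$ and $z^{k_j}\to\bar z$), gives $\limsup_j f(W^{k_j})\le f(\bar W)$. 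The identical reasoning applied to the $H$-update handles $g$, and continuity of $F$ closes the gap, so $J(z^{k_j})\to J(\bar z)$ and hence $\bar z\in\mathrm{crit}\,J$.

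Parts (ii) and (iii) are then routine. For (ii), $\omega(z^0)$ is closed as a set of subsequential limits and bounded as a subset of the bounded trajectory, hence compact; moreover $\{J(z^k)\}$ converges to some value $J^{*}$ by Lemma~\ref{lem-1}(i), and since $J(z^{k_j})\to J(\bar z)$ for every limit point while $J(z^{k_j})\to J^{*}$ as a subsequence of a convergent sequence, we conclude $J\equiv J^{*}$ on $\omega(z^0)$. For (iii), I would argue by contradiction: if $\mathrm{dist}(z^k,\omega(z^0))\not\to0$, some subsequence stays at distance at least $\varepsilon$ from $\omega(z^0)$, yet by boundedness it admits a further convergent sub-subsequence whose limit lies in $\omega(z^0)$ by definition, contradicting the distance bound.
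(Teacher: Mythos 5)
Your proposal is correct and follows essentially the same route as the paper's own proof: the same use of Lemma~\ref{lem-1} and Lemma~\ref{lem-2} to drive the subgradients $(p_W^{k_j},p_H^{k_j})$ to zero, the same proximal-minimality trick (testing the update against $\bar W$, $\bar H$) to get $\limsup_j f(W^{k_j})\le f(\bar W)$ and close the gap left by lower semicontinuity, and the same boundedness/contradiction arguments for compactness and for (iii). If anything, your ordering is slightly cleaner — you establish the value convergence $J(z^{k_j})\to J(\bar z)$ \emph{before} invoking closedness of $\partial J$ in part (i), whereas the paper asserts it in (i) and only proves it in (ii); you also explicitly flag the boundedness of $\{z^k\}$ as a hypothesis needing justification, which the paper uses tacitly.
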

\begin{proof}
    (i) From Lemma \ref{lem-1}, we know
    \begin{equation}
            \lim _{j \rightarrow \infty}\left\Vert W^{k_{j}}-\widetilde{W}^{k_{j}-1}\right\Vert=0, 
            \lim _{j \rightarrow \infty}\left\Vert H^{k_{j}}-\widetilde{H}^{k_{j}-1}\right\Vert=0,
    \end{equation}
    hence
    \begin{equation}
        \begin{split}
            \lim _{j \rightarrow \infty} W^{k_{j}}=\lim _{j \rightarrow \infty} \widetilde{W}^{k_{j}-1}=\bar{W},\\
            \quad \lim _{j \rightarrow \infty} H^{k_{j}}=\lim _{j \rightarrow \infty} \widetilde{H}^{k_{j}-1}=\bar{H}.
        \end{split}
    \end{equation}
    From Lemma \ref{lem-2}, we have
    \begin{equation}
        \Vert (p_{W}^{k_j},p_{H}^{k_j})\Vert\leq \rho\Vert (W^{k_j}-\widetilde{W}^{k_j-1},H^{k_j}-\widetilde{H}^{k_j-1})\Vert,
    \end{equation}
    further we can conclude
    \begin{equation}
        (p_{W}^{k_j},p_{H}^{k_j})\rightarrow (0,0) \text{ as } j\rightarrow \infty.
    \end{equation}
    According to $\lim_{j \rightarrow \infty}j(W^{k_{j}}, H^{k_{j}})=j(\bar{W}, \bar{H})$, $(q_{W}^{k_j}, p_{H}^{k_j})\in \partial J(W^{k_{j}}, H^{k_{j}})$ and $\partial J$ closed, we get $(0, 0)\in \partial J(\bar{W}, \bar{H})$, this proves that the $\bar{z}=(\bar{W},\bar{H})$ is a critical point of $J$, that is to say $\omega(z^{0})\subset\text{crit}J$.\\
    (ii) The boundedness of the sequence $\{z^k\}$ implies that the set $\omega(z^0)$ is not empty. Furthermore, the expression for $\omega(z^0)$ can be redefined as the intersection of compact sets
    \begin{equation}
        \omega(z^{0}) = \bigcap_{s \in \mathbb{N}} \overline {\bigcup_{k\geq s}\{z_{k}\}},
    \end{equation}
    demonstrating that $\omega(z^0)$ forms a compact set and for any $\bar{z}=(\bar{W},\bar{H})\in \omega(z^{0})$, there exist a subsequence $\{z^{k_j}\}$ such that
    \begin{equation}
        \lim _{j \rightarrow \infty} z^{k_{j}}=\bar{z}.
    \end{equation}
    Since $f$ and $g$ are lower semicontinuous, we obtain that 
    \begin{equation}\label{eq-41}
        \lim_{j\to \infty}\inf f(W^{k_j}) \geq f(\bar{W}), \lim_{j\to \infty}\inf g(H^{k_j}) \geq g(\bar{H}).
    \end{equation}
    From the iterative step in Algorithm \ref{alg-3}, we know that for all integer k
    \begin{equation}
    \begin{split}
        W^{k+1} \in \operatorname{argmin}_{W} &\{\langle W-\widetilde{W}^k,\nabla_{W}F(\widetilde{W}^{k},H^{k})\rangle+\\
        &\frac{c_k}{2}\Vert W-\widetilde{W}^k\Vert^2+f(W)\},
    \end{split}
    \end{equation}
    letting $W=\bar{W}$, we can get
    \begin{equation}
    \begin{split}
        \langle W^{k+1}-\widetilde{W}^k&,\nabla_{W}F(\widetilde{W}^{k},H^{k})\rangle+\frac{c_k}{2}\Vert W^{k+1}-\widetilde{W}^k\Vert^2+\\
        f(W^{k+1})&\leq\langle\bar{W}-\widetilde{W}^k,\nabla_{W}F(\widetilde{W}^{k},H^{k})\rangle+\\
        &\frac{c_k}{2}\Vert \bar{W}-\widetilde{W}^k\Vert^2+f(\bar{W}).
    \end{split}
    \end{equation}
    Then letting $k=k_{j}-1$, 
    \begin{equation}\label{eq-44}
    \begin{split}
        &\lim_{j\to \infty}\sup f(W^{k_j})\\
        &\leq\lim_{j\to \infty}\sup(\langle \bar{W}-\widetilde{W}^{k_j-1},\nabla_{W}F(\widetilde{W}^{k_j-1},H^{k_j-1})\rangle\\
        &+\frac{c_k}{2}\Vert\bar{W}-\widetilde{W}^{k_j-1}\Vert^2)+f(\bar{W}),
    \end{split}
    \end{equation}
    since $\widetilde{W} ^{k_j-1}\to \bar{W}$ as $j\to \infty$, Eq.(\ref{eq-44}) reduces to
    \begin{equation}
        \lim_{j\to \infty}\sup f\left(W^{k_j}\right)\leq f\left(\bar{W}\right),
    \end{equation}
    thus, by combining Eq.(\ref{eq-41}), we obtain
    \begin{equation}
        \lim_{j\to \infty}f\left(W^{k_j}\right)=f\left(\bar{W}\right).
    \end{equation}
    Applying a similar argument to $g$ and $H^k$, we get
    \begin{equation}
        \lim_{j\to \infty}g\left(H^{k_j}\right)=g\left(\bar{H}\right).
    \end{equation}
    Considering the continuity of $F$ (see Assumption \ref{assumption-1}), we can finally obtain
    \begin{equation}
        \lim_{j\to \infty}J\left(z^{k_j}\right)=J\left(\bar{z}\right),
    \end{equation}
    which means that $J$ is constant on $\omega(z^{0})$.\\
    (iii)
    This term is a basic result of the limit point definition, or we can use the strategy of disproving. Suppose $\lim_{k\to\infty}dist(z^k,\omega(z^{0}))\neq 0$, then there is a subsequence $\{z^{k_m}\}$ and a constant $M$ greater than zero, so that
    \begin{equation}
        \Vert z^{k_m}-\bar{z}\Vert \geq dist(z^{k_m},\omega(z^{0}))>M\text{, }\forall\bar{z}\in \omega(z^{0}).
    \end{equation}
    On the other hand, $\{z^{k_m}\}$ is bounded and has a subsequence $\{z^{k_{m_j}}\}$ converges to a point in $\omega(z^{0})$, so
    \begin{equation}
        \lim_{k\to\infty}dist(z^k,\omega(z^{0}))=0.
    \end{equation}
This means the Lemma \ref{lem-3} is true.
\end{proof}

\begin{theorem}\label{theorem-1}
    (Global convergence)
    Assuming that $J$ is a KL function and $\{z^k\}$ is a sequence generated by algorithm \ref{alg-3} with initial point $z^0$. Suppose Assumptions \ref{assumption-1} holds then it has the following properties:\\
    (i) $\sum_{k=0}^{\infty}\left\|z^{k+1}-z^k\right\|<\infty$.\\
    (ii) The sequence $\{z^k\}$ converges to the critical point of $J$.
\end{theorem}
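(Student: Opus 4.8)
The plan is to follow the standard Kurdyka–Łojasiewicz (KL) convergence scheme of \cite{bo2014proximal}, adapted to the extrapolated iterates of Algorithm \ref{alg-3}. Throughout write $z^k=(W^k,H^k)$ and $\widetilde z^k=(\widetilde W^k,\widetilde H^k)$, and set $\Delta_k:=\|z^k-\widetilde z^{k-1}\|$. First I would dispose of the trivial case: if $J(z^{k_0})=J^*$ for some finite $k_0$, where $J^*$ denotes the common value of $J$ on $\omega(z^0)$ supplied by Lemma \ref{lem-3}(ii), then the sufficient-decrease inequality of Lemma \ref{lem-1}(i) forces $z^{k+1}=\widetilde z^k$ for all $k\ge k_0$, the tail is finitely summable, and both claims follow. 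In the main case $J(z^k)>J^*$ for all $k$, with $J(z^k)\downarrow J^*$ by Lemma \ref{lem-1}(i). Since by Lemma \ref{lem-3}(ii) the set $\omega(z^0)$ is nonempty and compact and $J\equiv J^*$ on it, I would invoke the uniformized KL property: there exist $\eta>0$, a neighborhood $U$ of $\omega(z^0)$, and a concave desingularizing function $\varphi$ with $\varphi'>0$ such that $\varphi'(J(z)-J^*)\,\mathrm{dist}(0,\partial J(z))\ge 1$ whenever $z\in U$ and $J^*<J(z)<J^*+\eta$. Using Lemma \ref{lem-3}(iii), i.e. $\mathrm{dist}(z^k,\omega(z^0))\to 0$, together with $J(z^k)\downarrow J^*$, all iterates with index $k\ge K$ for some $K$ lie in this KL region.

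The heart of the argument is to turn the KL inequality into a summable recursion. From the concavity of $\varphi$ I get $\varphi(J(z^k)-J^*)-\varphi(J(z^{k+1})-J^*)\ge \varphi'(J(z^k)-J^*)\,(J(z^k)-J(z^{k+1}))$. The subgradient bound of Lemma \ref{lem-2} gives $\mathrm{dist}(0,\partial J(z^k))\le\|(p_W^k,p_H^k)\|\le\rho\,\Delta_k$, so the KL inequality yields $\varphi'(J(z^k)-J^*)\ge 1/(\rho\,\Delta_k)$; the sufficient decrease of Lemma \ref{lem-1}(i) gives $J(z^k)-J(z^{k+1})\ge\rho_0\,\Delta_{k+1}^2$. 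Writing $\phi_k:=\varphi(J(z^k)-J^*)$ and combining these three facts produces $\Delta_{k+1}^2\le(\rho/\rho_0)\,\Delta_k\,(\phi_k-\phi_{k+1})$. Applying $\sqrt{ab}\le\tfrac12(a+b)$ and telescoping then gives $2\Delta_{k+1}\le\Delta_k+(\rho/\rho_0)(\phi_k-\phi_{k+1})$, whence $\sum_{k\ge K}\Delta_{k+1}\le\Delta_K+(\rho/\rho_0)\phi_K<\infty$, i.e. $\sum_k\Delta_k<\infty$.

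It remains to pass from the extrapolated gaps $\Delta_k$ to the true increments $s_k:=\|z^{k+1}-z^k\|$, which is the step where Algorithm \ref{alg-3} genuinely differs from plain PALM and which I expect to be the main obstacle. Since $z^{k+1}-z^k=(z^{k+1}-\widetilde z^k)+\beta_k(z^k-z^{k-1})$ and $\beta_k\le\beta_{\max}$, I get $s_k\le\Delta_{k+1}+\beta_{\max}\,s_{k-1}$; summing and using the standing hypothesis $\beta_{\max}<1$ gives $(1-\beta_{\max})\sum_k s_k\le\sum_k\Delta_{k+1}+\text{(finite boundary term)}<\infty$, which establishes $\sum_{k=0}^\infty\|z^{k+1}-z^k\|<\infty$ and proves (i). Finite length immediately implies that $\{z^k\}$ is Cauchy and therefore convergent, say $z^k\to z^*$; since $z^*\in\omega(z^0)$ and Lemma \ref{lem-3}(i) gives $\omega(z^0)\subset\mathrm{crit}\,J$, the limit $z^*$ is a critical point of $J$, proving (ii). The delicate points to handle with care are the justification that the iterates eventually and permanently stay in the KL neighborhood $U$, so that $\varphi'$ may be invoked at every large index, and the bookkeeping of the extrapolation relation that, via $\beta_{\max}<1$, converts summability of $\{\Delta_k\}$ into summability of $\{s_k\}$.
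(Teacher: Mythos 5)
Your proposal is correct and follows essentially the same route as the paper's proof: the uniformized KL inequality on a neighborhood of $\omega(z^0)$, the subgradient bound of Lemma \ref{lem-2}, the sufficient decrease of Lemma \ref{lem-1}, concavity of $\varphi$ plus the AM--GM/telescoping step to get summability of the extrapolated gaps $\|z^{k+1}-\widetilde{z}^k\|$, and finally the relation $\|z^{k+1}-z^k\|\leq\|z^{k+1}-\widetilde{z}^k\|+\beta_k\|z^k-z^{k-1}\|$ with $\beta_{\max}<1$ to conclude finite length, followed by the Cauchy argument and Lemma \ref{lem-3} for part (ii). The only (welcome) refinement beyond the paper is your explicit treatment of the degenerate case $J(z^{k_0})=J^*$, which the paper glosses over when it asserts $J(\bar{z})<J(z^k)$ for all $k$.
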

\begin{proof}
    (i) Since $\{J(z^k)\}$ is a non-increasing sequence, and from Lemma \ref{lem-3} we know that $lim_{k \to \infty}J(z^k)=J(\bar{z})$, it is clear that $J(\bar{z})<J(z^k)$ for all $k>0$ and for any $\eta>0$, there exists a positive integer $k_0$ such that $J(z^{k_0})<J(\bar{z})+\eta$, hence for all $k>k_0$ we have
    \begin{equation}\label{eq-49}
        z^k\in \left[ J(\bar{z})<J(z^{k})<J(\bar{z})+\eta\right].
    \end{equation}
    From Lemma \ref{lem-3} (iv) we know that $lim_{k \to \infty}dist(z^k,\omega(z^{0}))=0$, thus for any $\epsilon>0$ there exists a positive integer $k_1$ such that
    \begin{equation}\label{eq-50}
        dist(z^k,\omega(z^{0}))<\epsilon, \forall k>k_1.
    \end{equation}
    Based on Eq. (\ref{eq-49}) and (\ref{eq-50}), it is observed that by letting $l=\max\{k_0,k_1\}$, for any $k>l$, we have
    \begin{equation}
        z^k\in \{ z|\text{dist}(z^k,\omega(z^{0}))<\epsilon \}\cap \left[J(\bar{z})<J(z^{k_0})<J(\bar{z})+\eta \right].
    \end{equation}
    Since $J$ is finite and  constant on the nonempty and compact set $\omega(z^{0})$ (see Lemma \ref{lem-3}), according to Lemma 6 in \cite{bo2014proximal}, there exists a concave function $\varphi$, for any $k>l$ we get that
    \begin{equation}\label{eq-52}
        \varphi'\left(J(z^{k})-J(\bar{z})\right)\text{dist}(0,\partial J(z^k))\geq 1.
    \end{equation}
    From Lemma \ref{lem-2}, we have
    \begin{equation}\label{eq-53}
        \begin{split}
            \text{dist}(0,\partial J(z^k))
            &\leq \rho\left\Vert (W^k-\widetilde{W}^{k-1},H^k-\widetilde{H}^{k-1})\right\Vert\\
            &=\rho\left\Vert z^k-\omega^{k-1}\right\Vert,
        \end{split}
    \end{equation}
    where $\omega^{k-1}:=(\widetilde{W}^{k-1},\widetilde{H}^{k-1})$, combine Eq. (\ref{eq-52}) and (\ref{eq-53}), we get that
    \begin{equation}\label{eq-54}
        \varphi'\left(J(z^{k})-J(\bar{z})\right)\geq \frac{1}{\rho\left\Vert z^k-\omega^{k-1}\right\Vert}.
    \end{equation}
    From the convexity of $\varphi$ we get
    \begin{equation}
        \begin{split}
            &\varphi\left(J(z^{k+1})-J(\bar{z})\right)-\varphi\left(J(z^{k})-J(\bar{z})\right) \\
            &\leq\varphi'\left(J(z^{k})-J(\bar{z})\right)\left( J(z^{k+1})-J(z^k)\right),
        \end{split}
    \end{equation}
    and from Lemma \ref{lem-1} and Eq. (\ref{eq-54}) we have
    \begin{equation}\label{eq-56}
        \begin{split}
            &\varphi\left(J(z^{k})-J(\bar{z})\right)-\varphi\left(J(z^{k+1})-J(\bar{z})\right)\\
            &\geq \varphi'\left(J(z^{k})-J(\bar{z})\right)\left( J(z^{k})-J(z^{k+1})\right)\\
            &\geq \frac{\rho_0\left\Vert z^{k+1}-\omega^{k}\right\Vert^2}{\rho\left\Vert z^{k}-\omega^{k-1}\right\Vert}.
        \end{split}
    \end{equation}
    For convenience, we define $\varDelta_k=\varphi\left(J(z^k)-J(\bar{z})\right)$, let $C=\frac{\rho}{\rho_0}$, then Eq. (\ref{eq-56}) can be simplified as
    \begin{equation}
        \varDelta_k-\varDelta_{k+1}\geq \frac{\left\Vert z^{k+1}-\omega^{k}\right\Vert^2}{C\left\Vert z^{k}-\omega^{k-1}\right\Vert}.
    \end{equation}
    It follows that
    \begin{equation}
        \left\Vert z^{k+1}-\omega^{k}\right\Vert^2\leq C(\varDelta_k-\varDelta_{k+1})\left\Vert z^{k}-\omega^{k-1}\right\Vert,
    \end{equation}
    \begin{equation}\label{eq-59}
        2\left\Vert z^{k+1}-\omega^{k}\right\Vert\leq C(\varDelta_k-\varDelta_{k+1})+\left\Vert z^{k}-\omega^{k-1}\right\Vert.
    \end{equation}
    Summing up Eq. (\ref{eq-59}) for $k=l+1,...,K$ yields
    \begin{equation}
        \begin{split}
            &2\sum_{k=l+1}^K\left\Vert z^{k+1}-\omega^{k}\right\Vert\\
            &\leq\sum_{k=l+1}^K\left\Vert z^{k}-\omega^{k-1}\right\Vert+ C(\varDelta_{l+1}-\varDelta_{K+1})\\
            &\leq\left\Vert z^{l+1}-\omega^{l}\right\Vert+\sum_{k=l+1}^K\left\Vert z^{k+1}-\omega^{k}\right\Vert+C(\varDelta_{l+1}-\varDelta_{K+1}),
        \end{split}
    \end{equation}
    and hence, let $K\to \infty$, we get
    \begin{equation}
        \sum_{k=l+1}^\infty\left\Vert z^{k+1}-\omega^{k}\right\Vert\leq\left\Vert z^{l+1}-\omega^{l}\right\Vert+C\varDelta_{l+1} < \infty.
    \end{equation}
    This shows that the sequence $\{z^k\}$ is finite, and
    \begin{equation}
        \sum_{k=0}^\infty\left\Vert z^{k+1}-\omega^{k}\right\Vert<\infty.
    \end{equation}
    Study the iteration point $\omega^k=(\widetilde{W}^k,\widetilde{H}^k)$ in Algorithm \ref{alg-3}. We note that if $(\widetilde{W}^k,\widetilde{H}^k)$ is generated by (5)(8), then
     \begin{equation}\label{eq-63}
        \begin{split}
            &(W^{k+1}-\widetilde{W}^k,H^{k+1}-\widetilde{H}^k)\\
            &=(W^{k+1}-W^{k}-\beta_k(W^{k}-W^{k-1})\\
            &,H^{k+1}-H^{k}-\beta_k(H^{k}-H^{k-1}))\\
            &=z^{k+1}-z^k-\beta_k(z^k-z^{k-1}).
        \end{split}
    \end{equation}
    If $(\widetilde{W}^k,\widetilde{H}^k)$ is generated by (13), then
    \begin{equation}\label{eq-64}
        \begin{split}
            (W^{k+1}-\widetilde{W}^k,H^{k+1}-\widetilde{H}^k)&=(W^{k+1}-W^k,H^{k+1}-H^k)\\
            &=z^{k+1}-z^k.
        \end{split}
    \end{equation}
    Combining Eq. (\ref{eq-63}) and (\ref{eq-64}) we can get
    \begin{equation}\label{eq-65}
        \left\Vert z^{k+1}-\omega^{k}\right\Vert\geq \left\Vert z^{k+1}-z^{k}\right\Vert-\beta_k\left\Vert z^{k}-z^{k-1}\right\Vert,
    \end{equation}
    and summing up Eq. (\ref{eq-65}) for $k=l+1,...,K$ yields
    \begin{equation}\label{eq-66}
        \begin{split}
            &\sum_{k=l+1}^K\left\Vert z^{k+1}-z^{k}\right\Vert-\sum_{k=l+1}^K\beta_k\left\Vert z^{k}-z^{k-1}\right\Vert\\
            &\leq \sum_{k=l+1}^K\left\Vert z^{k+1}-\omega^{k}\right\Vert<\infty,
        \end{split}
    \end{equation}
    and hence
    \begin{equation}
        \begin{split}\label{eq-67}
            &\sum_{k=l+1}^K\left\Vert z^{k+1}-z^{k}\right\Vert-\sum_{k=l+1}^K\beta_k\left\Vert z^{k}-z^{k-1}\right\Vert\\
            &\geq\sum_{k=l+1}^K\left\Vert z^{k+1}-z^{k}\right\Vert-\beta_{max}\sum_{k=l+1}^K\left\Vert z^{k+1}-z^{k}\right\Vert-\beta_k\left\Vert z^{l+1}-z^{l}\right\Vert\\
            &=(1-\beta_{max})\sum_{k=l+1}^K\left\Vert z^{k+1}-z^{k}\right\Vert-\beta_k\left\Vert z^{l+1}-z^{l}\right\Vert.
        \end{split}
    \end{equation}
    Combining Eq. (\ref{eq-66}) and (\ref{eq-67}) we have
    \begin{equation*}
        \begin{split}
            &(1-\beta_{max})\sum_{k=l+1}^K\left\Vert z^{k+1}-z^{k}\right\Vert<\infty.
        \end{split}
    \end{equation*}
    Let $K\to \infty$, from $0\leq\beta_{max}<1$ we can get
    \begin{equation}
        \sum_{k=l+1}^{\infty}\left\Vert z^{k+1}-z^{k}\right\Vert<\infty,
    \end{equation}
    which means
    \begin{equation}\label{eq-69}
        \sum_{k=0}^{\infty}\left\Vert z^{k+1}-z^{k}\right\Vert<\infty.
    \end{equation}
    (ii) Eq. (\ref{eq-69}) shows that with $m>n\geq l$, we have
    \begin{equation}\label{eq-70}
        \begin{split}
            \left\Vert z^m-z^n\right\Vert=\left\Vert \sum_{k=n}^{m-1}(z^{k+1}-z^k)\right\Vert&\leq\sum_{k=n}^{m-1}\left\Vert z^{k+1}-z^k\right\Vert\\
            &<\sum_{k=n}^{\infty}\left\Vert z^{k+1}-z^k\right\Vert.
        \end{split}
    \end{equation}
    From Eq. (\ref{eq-70}), we get $\left\Vert z^m-z^n\right\Vert$ converges to zero as $n\to\infty$, it follows that $\{z^k\}$ is a Cauchy sequence and thus a convergent sequence. From Lemma \ref{lem-3} (iii), it can be concluded that $\{z^k\}$ converges to a critical point of $J$.
\end{proof}

\begin{theorem}\label{theorem-2}
(Convergence rate)
Let Assumption \ref{assumption-1} hold and the desingularizing function has the form of $\varphi(t)=\frac{c}{\theta} t^{\theta}$ with $\theta \in \left(0,1\right]$, $c>0$. Let $J(z) = J^{*}$  for all  $z \in \omega(z^0)$, and denote  $r_{k}:=J\left(z^{k}\right)-J^{*}$. Then the sequence $\left\{r_{k}\right\}$ satisfies for $k_{2}$ large enough:
\begin{enumerate}
	\item {If $\theta=1$, then $r_{k}$ reduces to zero in finite steps;}
	\item {If $\theta\in\left[\frac{1}{2}, 1\right)$, then there exist a constant $P \in \left[0,1\right)$ such that $r_{k}\leq r_{k_{2}}P^{k-k_{2}}$;}
	\item {If $\theta\in\left(0,\frac{1}{2}\right)$, then there exist a constant $Q>0$ such that $r_{k}\leq\left(\frac{Q}{\left(k-k_{2}\right)(1-2\theta)}\right)^{\frac{1}{1-2 \theta}}$.}
\end{enumerate}
\end{theorem}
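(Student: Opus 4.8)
The plan is to collapse the whole analysis onto a single scalar recursion for $r_k:=J(z^k)-J^*$ and then to read off the three regimes from it. Since $\{J(z^k)\}$ is nonincreasing and converges to $J^*$ (Lemma \ref{lem-1} and Lemma \ref{lem-3}), we have $r_k\ge 0$ and $r_k\downarrow 0$; moreover, repeating the neighbourhood argument of Theorem \ref{theorem-1} produces an index $k_2$ past which every $z^{k+1}$ lies in the KL neighbourhood of $\omega(z^0)$, so the KL inequality is available. I would then combine three facts at index $k+1$: the sufficient decrease $r_k-r_{k+1}\ge \rho_0\Vert z^{k+1}-\omega^k\Vert^2$ from Lemma \ref{lem-1}(i); the subgradient bound $\text{dist}(0,\partial J(z^{k+1}))\le \rho\Vert z^{k+1}-\omega^k\Vert$ from Lemma \ref{lem-2}; and the KL inequality $\varphi'(r_{k+1})\,\text{dist}(0,\partial J(z^{k+1}))\ge 1$. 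Because $\varphi(t)=\frac{c}{\theta}t^{\theta}$ gives $\varphi'(t)=c\,t^{\theta-1}$, the KL inequality reads $\text{dist}(0,\partial J(z^{k+1}))\ge \frac{1}{c}r_{k+1}^{1-\theta}$. Chaining these and squaring yields the master inequality
\[
r_{k+1}^{2(1-\theta)}\le C\,(r_k-r_{k+1}),\qquad C:=\frac{c^2\rho^2}{\rho_0},
\]
valid for every $k\ge k_2$; all three cases are consequences of this one estimate.

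For $\theta=1$ the exponent $2(1-\theta)$ vanishes, so the recursion becomes $1\le C(r_k-r_{k+1})$ whenever $r_{k+1}>0$, forcing a decrease of at least $1/C$ at each such step; since $r_k\ge 0$ this can occur only finitely often, hence $r_k=0$ after finitely many iterations. For $\theta\in[\tfrac12,1)$ we have $2(1-\theta)\in(0,1]$, so once $k$ is large enough that $r_{k+1}\le 1$ the elementary bound $r_{k+1}\le r_{k+1}^{2(1-\theta)}$ combined with the master inequality gives $r_{k+1}\le C(r_k-r_{k+1})$; rearranging, $r_{k+1}\le \frac{C}{1+C}r_k$, which telescopes to the geometric estimate $r_k\le r_{k_2}P^{\,k-k_2}$ with $P:=\frac{C}{1+C}\in[0,1)$.

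The case $\theta\in(0,\tfrac12)$ is the delicate one and the principal obstacle, since here $\alpha:=2(1-\theta)>1$ and the master inequality $r_{k+1}^{\alpha}\le C(r_k-r_{k+1})$ no longer self-improves to geometric decay. The standard remedy I would use is the convex, decreasing auxiliary function $\psi(s)=s^{-(\alpha-1)}=s^{-(1-2\theta)}$, aiming to show a uniform gap $\psi(r_{k+1})-\psi(r_k)\ge \mu>0$. I would distinguish two subcases. When consecutive residuals are comparable, say $r_{k+1}\ge \tfrac12 r_k$, convexity gives $\psi(r_{k+1})-\psi(r_k)\ge -\psi'(r_k)(r_k-r_{k+1})=(\alpha-1)r_k^{-\alpha}(r_k-r_{k+1})$, and the master inequality turns this into the constant lower bound $\tfrac{\alpha-1}{C}2^{-\alpha}$. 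When instead $r_{k+1}<\tfrac12 r_k$, a direct computation gives $\psi(r_{k+1})-\psi(r_k)\ge (2^{\alpha-1}-1)\psi(r_k)\ge (2^{\alpha-1}-1)\psi(r_{k_2})$. Taking $\mu$ to be the minimum of these two positive quantities is the crux: it must be shown independent of $k$. Telescoping $\psi(r_k)-\psi(r_{k_2})\ge \mu(k-k_2)$ then yields $r_k^{1-2\theta}\le \frac{1}{\mu(k-k_2)}$, and absorbing the constants into a single $Q$ produces the claimed sublinear rate $r_k\le \left(\frac{Q}{(k-k_2)(1-2\theta)}\right)^{1/(1-2\theta)}$.
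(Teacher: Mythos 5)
Your proposal is correct, but it takes a genuinely different route from the paper: the paper's entire proof of Theorem~\ref{theorem-2} is a one-line appeal to Theorem~3 of \cite{li2017convergence} (checking that Algorithm~\ref{alg-3} satisfies that theorem's hypotheses), whereas you give a self-contained derivation built from the paper's own machinery. Your argument---chaining the sufficient decrease of Lemma~\ref{lem-1}(i), the subgradient bound of Lemma~\ref{lem-2}, and the uniformized KL inequality (already invoked in the proof of Theorem~\ref{theorem-1}) into the master recursion $r_{k+1}^{2(1-\theta)}\le C\,(r_k-r_{k+1})$ with $C=c^{2}\rho^{2}/\rho_{0}$, then reading off the three regimes---is the standard Attouch--Bolte-style analysis, and each case is handled soundly: finite termination when the exponent vanishes, the self-improving bound $r_{k+1}\le \frac{C}{1+C}r_k$ once $r_{k+1}\le 1$ for $\theta\in[\tfrac12,1)$, and for $\theta\in(0,\tfrac12)$ the auxiliary function $\psi(s)=s^{-(1-2\theta)}$ with a per-step gap $\mu$ that is indeed uniform in $k$, since one subcase yields the constant $\frac{\alpha-1}{C}2^{-\alpha}$ and the other is bounded below by $(2^{\alpha-1}-1)\psi(r_{k_2})$, both independent of $k$. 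What your approach buys is transparency and explicit constants ($P=C/(1+C)$, $Q=(1-2\theta)/\mu$) expressed through the quantities $\rho_0$ and $\rho$ the paper has already established; what the paper's citation buys is brevity, at the cost of leaving the reader to verify that the adaptive-momentum scheme really fits the cited framework. Two cosmetic points you should make explicit: if $r_{k+1}=0$ at some index then monotonicity keeps $r$ at zero and all three claims are trivial (the KL inequality cannot be applied there), and $k_2$ must be chosen large enough both for the iterates to have entered the KL neighbourhood and, in case~2, so that $r_{k_2}\le 1$.
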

\begin{proof}
Checking the assumptions of Theorem 3 in reference \cite{li2017convergence}, we observe that all assumptions required in Algorithm \ref{alg-3} are clearly satisfied, so Theorem \ref{theorem-2} holds.
\end{proof}

\section{Experiments}
We evaluate the effectiveness of the proposed GNMF\_$\ell_{20}$ method for the clustering task and compare it with state-of-the-art matrix factorization methods, as well as the k-means and sparse k-means methods. The competing methods considered in the evaluation are as follows:
\begin{itemize}
    \item NMF: Non-negative Matrix Factorization.
    \item NMF\_$\ell_0$: NMF with $\ell_{0}$ sparsity constraint.
    \item SNMF\_$\ell_{20}$: Sparse NMF with $\ell_{20}$ sparsity constraint \cite{min2022structured}.
    \item SNMF\_$\ell_{c0}$: Sparse NMF with $\ell_{c0}$ sparsity constraint \cite{min2022structured}.
    \item GNMF: Graph NMF.
    \item GNMF\_$\ell_{0}$: Graph NMF with $\ell_{0}$ sparsity constraint.
    \item Kmeans: K-means clustering algorithm.
    \item Kmeans\_$\ell_{0}$: K-means clustering with $\ell_{0}$ sparsity constraint \cite{chang2018sparse}.
\end{itemize}
\subsection{Evaluation metrics}
In the experiment, we use Normalized Mutual Information (NMI) and Accuracy (ACC) as key metrics to evaluate clustering quality and accuracy.

Normalized Mutual Information (NMI) \cite{li2013clustering}:
NMI is a metric used to measure the similarity between the clustering results and t he ground truth labels. NMI ranges from 0 to 1, where a value closer to 1 indicates a higher consistency between the clustering results and the ground truth labels.

The formula to calculate NMI is as follows:
\begin{equation}
    \text{NMI} = \frac{2 \times I(C, T)}{H(C) + H(T)},
\end{equation}
where $C$ represents the clustering results, $T$ represents the ground truth labels, $I(C, T)$ denotes the mutual information between $C$ and $T$, and $H(C)$ and $H(T)$ represent the entropy of $C$ and $T$, respectively. 

Accuracy is a metric used to measure the correctness of the clustering results. It represents the ratio of correctly classified samples to the total number of samples. Accuracy ranges from 0 to 1, where a value closer to 1 indicates a higher accuracy of the clustering results.
The formula to calculate accuracy is as follows:
\begin{equation}
    \text{Accuracy} = \frac{\sum_{i=1}^{n} h(y_i, \hat{y_i})}{n},
\end{equation}
where $n$ represents the total number of samples, $y_i$ represents the true labels of the $i$-th sample, $\hat{y_i}$ represents the clustering result of the i-th sample, and $h(y_i, \hat{y_i})$ represents an indicator function that is 1 when $y_i$ and $\hat{y_i}$ are equal and 0 otherwise.
\subsection{Application to synthetic data}
\begin{figure*}[h]
  \centering \includegraphics[width=1\linewidth]{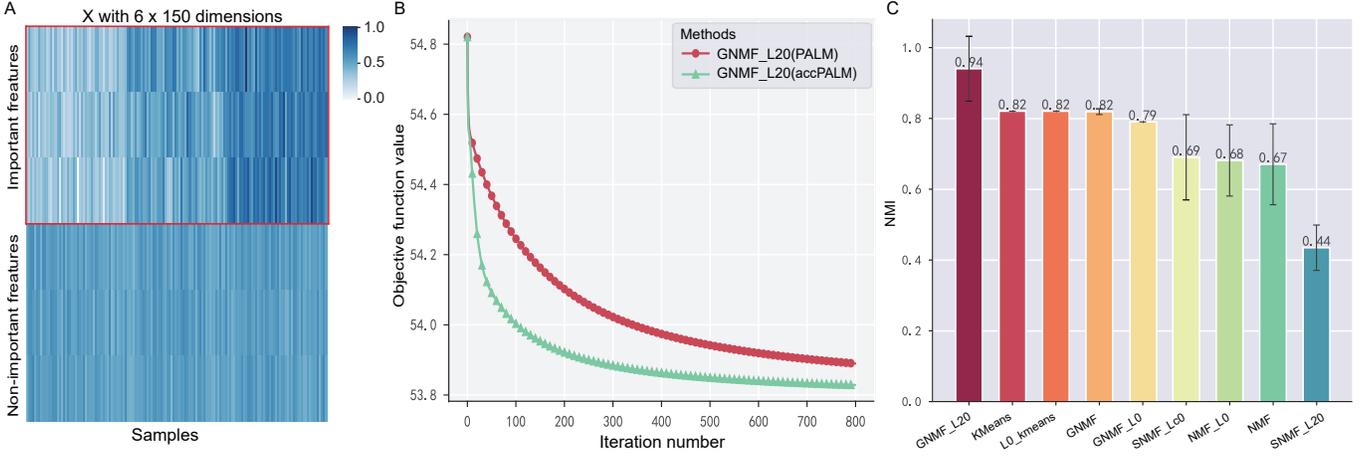}
  \caption{
  Results on the synthetic data.
  (A) Heatmap showing the synthetic data.
  (B) Convergence performance of PALM and accPALM for GNMF\_$\ell_{20}$ and the initial value of the inertial parameter $\beta$ of the acceleration method is 0.5.
  (C) Comparison of nine unsupervised clustering methods in terms of NMI on the synthetic data.
  }\label{fig-2}
\end{figure*}
We constructs a synthetic dataset with three Gaussian clusters. Each cluster is generated by sampling from a multivariate Gaussian distribution with a specified mean and covariance matrix. The means of the three clusters are set as -2, 0, and 2, respectively, with a covariance matrix of identity.
To introduce variability and noise into the dataset, we add random noise by generating samples from a normal distribution and appending them to the original data. The noise is added only to the last three rows of the dataset, while the rest of the data remains unchanged.
For consistency and comparability across the dataset, a linear normalization is performed. This involves subtracting the minimum value from each data point and dividing the result by the range of values, transforming the data to a new scale from 0 to 1.
Finally, we perturb the order of the last three rows by shuffling their columns randomly. And the synthetic data set we constructed is shown in Figure \ref{fig-2}A.

In order to conduct experiments on the GNMF\_$\ell_{20}$ method and achieve better results, we manually constructed the adjacency matrix for graph regularization. By customizing the adjacency matrix, we can incorporate prior knowledge or desired characteristics into the graph regularization process, thereby guiding the clustering algorithm to capture relevant information and improve clustering performance. Then we introduce the construction method of adjacency matrix as follows.

Firstly, we initialized a square matrix with zeros, representing the absence of connections between data points. Then, we divided the matrix into blocks corresponding to different clusters or groups in the dataset. Within each block, we randomly assigned non-zero values to represent the presence of connections between data points. The probabilities of assigning non-zero values were set according to a predefined distribution, reflecting the desired level of connectivity within each cluster. Finally, we ensured symmetry by setting the corresponding elements in the lower triangular part of the matrix equal to the corresponding elements in the upper triangular part.

\begin{table*}[ht]
\centering
\caption{Averaged performance of GNMF\_$\ell_{20}$ with different algorithms in terms of $Relative~Error = \frac{\|\bm{X} - \bm{W}\bm{H}\|_F}{\|\bm{X}\|_F}$ and running time (seconds) on the LIBRAS, UMIST and USPS datasets. The best results are highlighted in bold.
}\label{tab-3}
\begin{tabular}{r|cc|cc|cc}
  \hline
  Data & \multicolumn{2}{c}{LIBRAS data} & \multicolumn{2}{c}{UMIST data} & \multicolumn{2}{c}{JAFFE data} \\
   \cmidrule(lr){2-3} \cmidrule(lr){4-5}\cmidrule(lr){6-7}
  Metrics & Relative Error & Time (seconds) & Relative Error & Time (seconds) & Relative Error & Time (seconds) \\
  \hline
  BPL     & \textbf{0.736} & 47.007          & \textbf{0.971} & 133.238          & 0.564          & 52.882 \\
  PALM    & 0.745          & 48.961          & 0.990          & 75.352           & 0.566          & 51.134 \\
  iPALM   & 0.746          & 47.758          & 0.991          & 78.736           & 0.604          & 60.445 \\
  accPALM & 0.741          & \textbf{40.252} & \textbf{0.971} & \textbf{20.004}  & \textbf{0.563} & \textbf{47.746} \\
  
  \hline
\end{tabular}
\end{table*}

We demonstrate the convergence performance of PALM and accPALM in solving the GNMF\_$\ell_{20}$ problem on synthetic dataset in Figure \ref{fig-2}B. The results indicate that accPALM exhibits significantly faster convergence compared to the standalone PALM method. Subsequently, we compare the GNMF\_$\ell_{20}$ method with other classical clustering algorithms, revealing that our approach outperforms the other classical methods on synthetic dataset (See Figure \ref{fig-2}C). Our method excels in filtering out irrelevant features from the data matrix and the incorporation of graph structure further enhances its performance. Furthermore, we observed that the standard GNMF and SNMF$\_\ell_{20}$ methods are less effective compared to GNMF$\_\ell_{20}$. This underscores the performance improvement achieved through the combination of graph regularization and $\ell_{2,0}$-norm constraints.

\subsection{Application to real image datasets}
This study evaluates the proposed method and competing methods on three real-world image datasets, as follows:
\begin{itemize}
    \item \textbf{LIBRAS} dataset is a widely used dataset in sign language recognition research \cite{dua2017uci}. It consists of video sequences of Brazilian Sign Language (LIBRAS) gestures performed by different individuals. Each gesture is recorded from multiple viewpoints, capturing variations in hand shapes, movements, and orientations.
    \item \textbf{UMIST} dataset is a popular face recognition dataset. It contains grayscale images of faces from 20 different individuals, with varying poses, expressions, and lighting conditions.
    \item \textbf{JAFFE} dataset is a dataset for facial expression analysis. It consists of grayscale images of Japanese female faces displaying seven different facial expressions: neutral, happiness, sadness, surprise, anger, disgust, and fear.
\end{itemize}

We demonstrate the convergence performance when solving the GNMF\_$\ell_{20}$ model in Eq.(\ref{eq:11}) using PALM and accPALM on image data (Figure \ref{fig-3}). When the extrapolation parameter $\beta$ is 0, accPALM degenerates into the ordinary PALM algorithm. Across the three image datasets, various settings of extrapolation parameters exhibit similar impacts on convergence speed, and it is found that the convergence speed of our proposed accelerated method is significantly faster than that of the ordinary PALM method. And as the extrapolation parameter $\beta$ of the acceleration method increases, the convergence speed of the objective function also becomes faster, and the convergence speed is fastest when the parameter $\beta$ changes adaptively during the iteration process, so unless otherwise specified, the article The accPALM method adopts the strategy of $\beta$ adaptive change.
\begin{figure*}[h]
  \centering \includegraphics[width=1\linewidth]{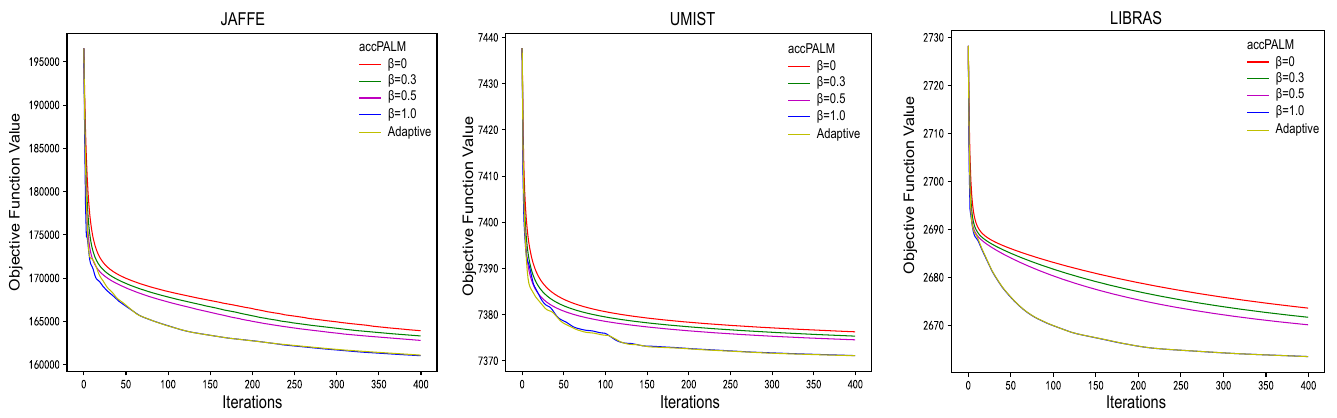}
  \caption{Convergence performance of accPALM with different parameters $\beta$ for GNMF\_$\ell_{20}$, where the first four lines represent the convergence performance when the extrapolation parameter $\beta$ is fixed, and the last line represents the convergence performance when the extrapolation parameter $\beta$ changes dynamically as the iteration proceeds. }\label{fig-3}
\end{figure*}

\begin{figure*}[h]
  \centering \includegraphics[width=1\linewidth]{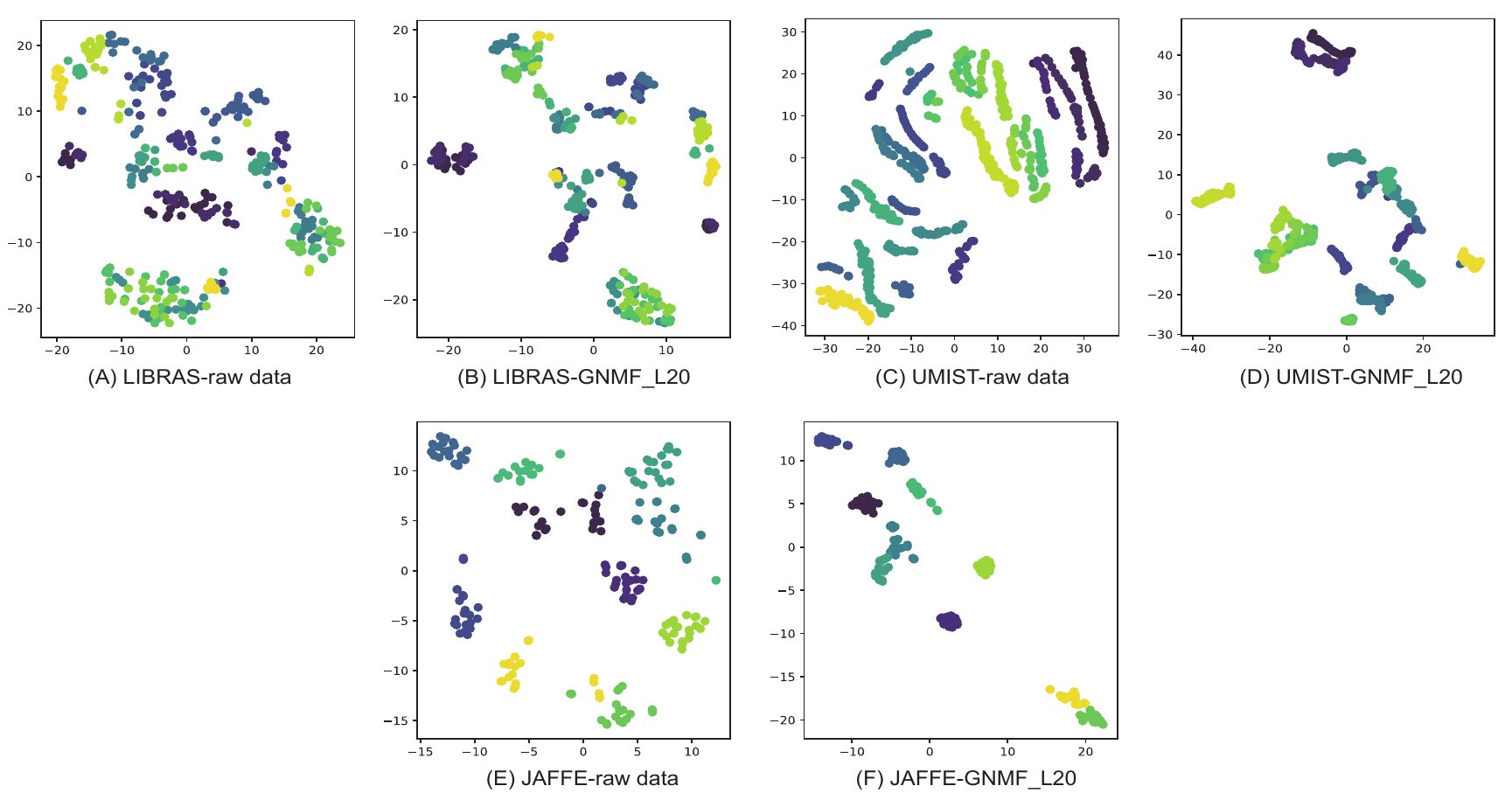}
   \caption{2D visualization results via t-SNE. The comparison of the raw data and the clustering results of GNMF\_$\ell_{20}$ on LIBRAS, UMIST and JAFFE datasets.}\label{fig-4}
\end{figure*}
We compare our proposed method with other commonly used optimization algorithms, including iPALM \cite{pock2016inertial} and BPL \cite{xu2017globally}, on three image datasets (Table \ref{tab-3}). To ensure fairness, all these algorithms were initialized with the same random starting points. We conducted ten repetitions of each algorithm, with each repetition using a different randomly generated initial point. Additionally, we set the termination criterion for all algorithms as $\frac{\|(\bm{W}^{t},\bm{H}^{t})-(\bm{W}^{t-1},\bm{H}^{t-1}) \|}{\| (\bm{W}^{t-1},\bm{H}^{t-1}) \|} \leq 10^{-3}$. We compare the convergence speed and relative error of these methods, and the results demonstrated that accPALM exhibited the fastest convergence rate and excellent performance in terms of relative error. Specifically, in UMIST dataset, the accPALM algorithm demonstrates significantly reduced computation time compared to other algorithms, enhancing convergence speed while maintaining good relative error.

We evaluate the clustering performance of these methods from two aspects, NMI (Normalized Mutual Information) and ACC (Accuracy) (Table \ref{tab-1}). We performed k-means clustering on the $\bm{H}$ matrix resulting from matrix factorization to obtain the final clustering results. The clustering evaluations are presented in Table \ref{tab-1}, where it is evident that our proposed GNMF\_$\ell_{20}$ method outperforms the others. In detail, the incorporation of graph regularization has a positive impact on clustering. The clustering index of GNMF is approximately $5\%$ to $15\%$ higher compared to the conventional NMF method. However, we observed that the enhancement of GNMF over NMF in the UMIST dataset is not pronounced. This could be attributed to the graph regularization module not effectively capturing the internal structure information of the data. The incorporation of $\ell_{2,0}$-norm constraint, however, may capture crucial features, thus improving the clustering performance.

\begin{table}[htbp]
\centering
\caption{Comparison in terms of (NMI \% $\pm$ std) and (ACC \% $\pm$ std) on the LIBRAS, UMIST and JAFFE datasets.}
\begin{adjustbox}{width=1\columnwidth,center}\label{tab-1}
\begin{tabular}{l|l|l|l}
  \hline
  \textbf{LIBRAS data}    &\#Features & NMI$\pm$sd &ACC$\pm$sd \\
  \hline
  GNMF\_$\ell_{20}$       & 40          & 65.66 $\pm$ 1.45 & 50.42 $\pm$ 2.59 \\
  NMF                     & all         & 59.45 $\pm$ 1.61 & 48.03 $\pm$ 2.20 \\
  NMF\_$\ell_{0}$         &\verb|/|     & 59.07 $\pm$ 0.92 & 48.36 $\pm$ 1.39 \\
  NMF\_$\ell_{20}$        & 40          & 58.08 $\pm$ 1.92 & 45.61 $\pm$ 1.96 \\
  NMF\_$\ell_{c0}$        &\verb|/|     & 59.42 $\pm$ 1.26 & 47.69 $\pm$ 1.87 \\
  GNMF                    & all         & 65.28 $\pm$ 1.30 & 49.28 $\pm$ 1.62 \\
  GNMF\_$\ell_{0}$        &\verb|/|     & 58.95 $\pm$ 1.62 & 49.00 $\pm$ 1.62 \\
  Kmeans                  & all         & 59.43 $\pm$ 0.71 & 44.42 $\pm$ 1.62 \\
  Kmeans$\_\ell_{0}$      & 40          & 40.73 $\pm$ 0.46 & 29.50 $\pm$ 0.73 \\
  \hline
  \hline
  \textbf{UMIST data}     &\#Features & NMI$\pm$sd &ACC$\pm$sd \\
  \hline
  GNMF\_$\ell_{20}$       & 50          & 74.76 $\pm$ 2.48 & 54.37 $\pm$ 3.07 \\
  NMF                     & all         & 61.13 $\pm$ 1.24 & 42.50 $\pm$ 1.73 \\
  NMF\_$\ell_{0}$         &\verb|/|     & 61.97 $\pm$ 2.00 & 39.97 $\pm$ 1.55 \\
  NMF\_$\ell_{20}$        & 2200        & 59.48 $\pm$ 2.49 & 42.10 $\pm$ 1.85 \\
  NMF\_$\ell_{c0}$        &\verb|/|     & 61.32 $\pm$ 1.07 & 40.09 $\pm$ 1.63 \\
  GNMF                    & all         & 64.56 $\pm$ 2.07 & 43.10 $\pm$ 3.08 \\
  GNMF\_$\ell_{0}$        &\verb|/|     & 74.09 $\pm$ 0.86 & 54.42 $\pm$ 1.67 \\
  Kmeans                  & all         & 65.34 $\pm$ 1.45 & 43.63 $\pm$ 1.83 \\
  Kmeans$\_\ell_{0}$      & 50          & 50.66 $\pm$ 1.38 & 36.70 $\pm$ 0.97 \\
  \hline
  \hline
  \textbf{JAFFE data}      &\#Features & NMI$\pm$sd &ACC$\pm$sd \\
  \hline
  GNMF\_$\ell_{20}$       & 50000       & 94.24 $\pm$ 2.73 & 94.98 $\pm$ 3.06 \\
  NMF                     & all         & 87.87 $\pm$ 4.05 & 86.29 $\pm$ 6.40 \\
  NMF\_$\ell_{0}$         &\verb|/|     & 88.11 $\pm$ 3.03 & 83.80 $\pm$ 4.79 \\
  NMF\_$\ell_{20}$        & 50000       & 86.24 $\pm$ 2.11 & 84.84 $\pm$ 2.59 \\
  NMF\_$\ell_{c0}$        &\verb|/|     & 87.48 $\pm$ 3.22 & 83.38 $\pm$ 4.61 \\
  GNMF                    & all         & 92.26 $\pm$ 4.30 & 90.89 $\pm$ 6.27 \\
  GNMF\_$\ell_{0}$        &\verb|/|     & 14.31 $\pm$ 0.39 & 22.82 $\pm$ 0.23 \\
  Kmeans                  & all         & 90.25 $\pm$ 3.48 & 85.68 $\pm$ 4.84 \\
  Kmeans$\_\ell_{0}$      & 50000       & 46.26 $\pm$ 0.58 & 33.99 $\pm$ 1.15 \\
  \hline
\end{tabular}
\end{adjustbox}
\end{table}
We also investigate the impact of the number of features on the algorithms' performance(Table \ref{tab-2}). By selecting various numbers of features and observing the clustering evaluations in our experiments, we found that GNMF\_$\ell_{20}$ consistently outperformed other methods across different feature counts. In the LIBRAS data set, we found that the clustering performance of GNMF\_$\ell_{20}$ generally gets better as the number of selected features increases. The clustering effectiveness peaks when the number of features reaches 50, suggesting that our method has identified the 50 most important features in the data, successfully mitigating noise interference.
\begin{table}[ht]
	\centering
	\caption{Comparison of (NMI \%) averages with different number of characteristic number ($k$) on the LIBRAS, UMIST and JAFFE datasets.}\label{tab-2}
    \resizebox{0.49\textwidth}{!}{
	\begin{tabular}{lllllll}
		\hline
		\textbf{LIBRAS data} ($k$)& 5 & 10 & 30 & 50 & 70 & 90 \\
		\hline
  
        GNMF\_$\ell_{20}$       & 48.63 & 59.58 & 66.53 & 66.30 & 65.34 & 64.50 \\
		NMF\_$\ell_{20}$        & 23.89 & 44.89 & 58.33 & 59.61 & 59.82 & 59.04 \\
		Kmeans$\_\ell_{0}$      & 32.13 & 36.40 & 46.39 & 51.86 & 57.28 & 59.00 \\
		\hline
		\hline
		\textbf{UMIST data} ($k$)& 200 & 500 & 1000 & 1500 & 2000 & 2500 \\
		\hline
  
        GNMF\_$\ell_{20}$       & 72.35 & 69.73 & 67.19 & 65.61 & 64.86 & 65.63 \\
		NMF\_$\ell_{20}$        & 49.46 & 56.88 & 59.11 & 60.47 & 58.69 & 59.27 \\
		Kmeans$\_\ell_{0}$      & 61.18 & 65.92 & 65.05 & 65.02 & 65.79 & 65.28 \\
		\hline
		\hline
	    \textbf{JAFFE data} ($k$)& 5000 & 10000 & 30000 & 50000 & 60000 & 65000 \\
		\hline
  
        GNMF\_$\ell_{20}$       & 93.19 & 93.40 & 92.82 & 94.18 & 92.58 & 90.83 \\
		NMF\_$\ell_{20}$        & 74.87 & 83.85 & 88.11 & 91.77 & 86.70 & 87.03 \\
		Kmeans$\_\ell_{0}$      & 60.85 & 66.93 & 79.16 & 86.40 & 90.29 & 88.43 \\
		\hline
	\end{tabular}
    }
\end{table}

We further use the t-SNE algorithmto visualize the clustering results of the three data sets in 2-dimensional space \cite{van2008visualizing}, as shown in Figure \ref{fig-4}. From Figure \ref{fig-4}, we can know that compared with the original data, the data points are driven into different groups after GNMF\_$\ell_{20}$ clustering, and points from the same cluster become more concentrated.
\section{Conclusion}
In this paper, we present a GNMF model with $\ell_{2,0}$-norm constraint and integrated feature selection and graph regularization into the NMF framework. We have known that $\ell_{2,0}$-norm satisfies the KŁ property, allowing us to utilize the PALM algorithm to solve the nonconvex and non-smooth optimization problems associated with $\ell_{2,0}$-norm constraint. Specifically, we have introduced the GNMF\_$\ell_{20}$ model and developed an accelerated version of PALM (accPALM) to solve it efficiently. 
Through extensive experiments on synthetic and real image datasets, we compare our proposed GNMF\_$\ell_{20}$ methods with competing methods for the clustering task. The results demonstrate the superior performance of our methods in terms of clustering accuracy and feature selection. 
Overall, our work contributes to the advancement of SSNMF models and their applications in clustering and feature selection tasks. The proposed algorithms have been supported by convergence theory and have demonstrated state-of-the-art performance. However, further research is needed to explore automatic parameter selection and global optimization solutions.
In conclusion, this study provides valuable insights into the GNMF models with $\ell_{2,0}$-norm constraint, presents efficient algorithms for solving them, and demonstrates their effectiveness in clustering and feature selection tasks. Our work contributes to the field of matrix factorization and offers potential for further advancements in data analysis and feature learning.

\balance
\small{
\bibliographystyle{IEEEtran}
\bibliography{MYREF}
}

\begin{IEEEbiography}[{\includegraphics[width=1in,height=1.25in,clip,keepaspectratio]{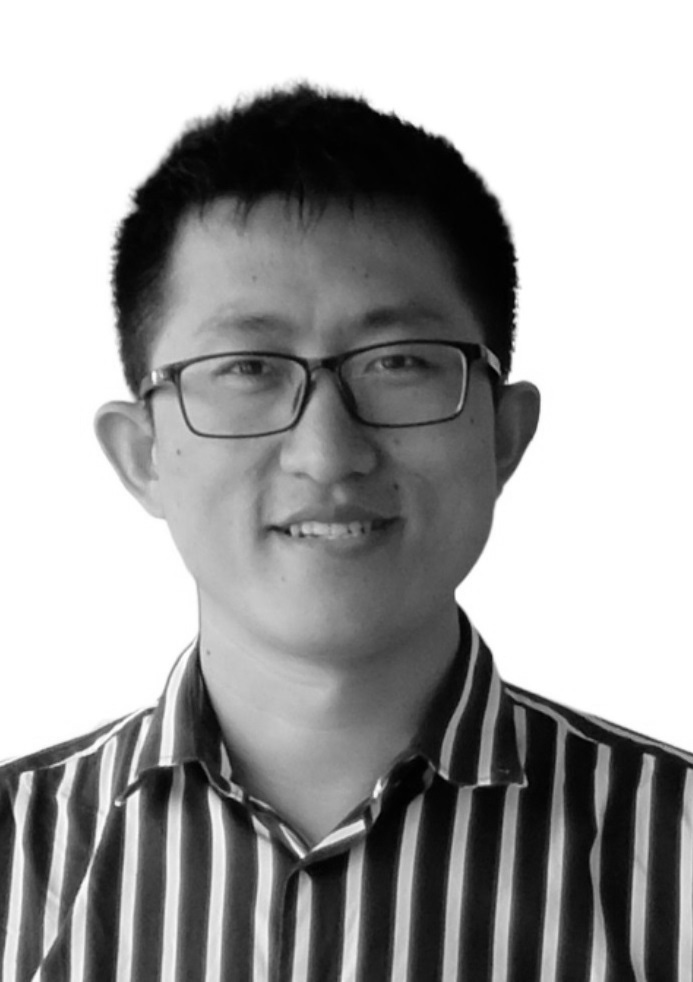}}]{Wenwen Min}
is currently an associate professor in the School of Information Science and Engineering, Yunnan University.
He received the Ph.D. degree in Computer Science from the School of Computer Science, Wuhan University, in 2017.
He was a visiting Ph.D student at the Academy of Mathematics and Systems Science, Chinese Academy of Sciences from 2015 to 2017.
He was a Postdoctoral Researcher with the School of Science and Engineering, The Chinese University of Hong Kong, Shenzhen, China, from 2019 to 2021.

His current research interests include machine/deep learning, sparse optimization and bioinformatics.
He has authored about 30 papers in journals and conferences, such as the
IEEE Trans Knowl Data Eng,
IEEE Trans Image Process,
IEEE Trans Neural Netw Learn Syst,
PLoS Comput Biol,
Bioinformatics, IEEE/ACM Trans Comput Biol Bioinform, and IEEE International Conference on Bioinformatics and Biomedicine.
\end{IEEEbiography}
\begin{IEEEbiography}[{\includegraphics[width=1in,height=1.25in,clip,keepaspectratio]{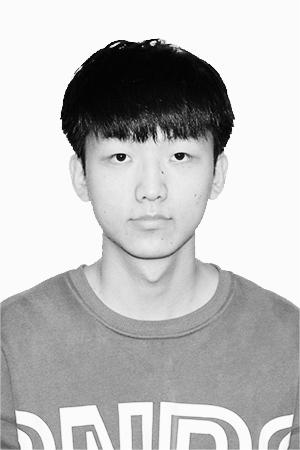}}]{Zhen Wang}
is currently pursuing a master's degree at the School of Information Science and Engineering, Yunnan University.

His current research interests include machine/deep learning, sparse optimization and graph machine learning.
\end{IEEEbiography}

\end{document}